\documentclass[11pt]{article}

\usepackage[utf8]{inputenc}
\usepackage[T1]{fontenc}
\usepackage{amsmath,amssymb,amsthm}
\usepackage{booktabs}
\usepackage{graphicx}
\usepackage{xcolor}
\usepackage[hidelinks]{hyperref}
\usepackage{algorithm}
\usepackage{algpseudocode}
\usepackage{tikz}
\usetikzlibrary{arrows.meta,positioning,fit,backgrounds,calc}
\usepackage{longtable}
\usepackage[margin=1in]{geometry}

\theoremstyle{definition}
\newtheorem{definition}{Definition}
\theoremstyle{plain}
\newtheorem{proposition}{Proposition}

\newcommand{\E}{E_\theta}
\newcommand{\lamC}{\lambda_C}
\newcommand{\tsame}{\tau_{\mathrm{same}}}
\newcommand{\twrong}{\tau_{\mathrm{wrong}}}
\newcommand{\tnoise}{\tau_{\mathrm{noise}}}
\newcommand{\Cand}{\mathcal{C}}

\tikzset{
  box/.style={rounded corners=2pt, draw=black!70, line width=0.6pt, align=center,
              inner sep=4pt, font=\small},
  bctx/.style={box, fill=blue!12},
  blam/.style={box, fill=violet!14},
  bsame/.style={box, fill=green!14},
  bwrong/.style={box, fill=orange!18},
  bnoise/.style={box, fill=black!8},
  bscore/.style={box, fill=purple!10},
  flow/.style={-{Triangle[width=6pt,length=6pt]}, line width=1.1pt, draw=black!70},
}

\title{Operator-Mismatch Contrastive Learning for\\ World-Relative Trajectory Coherence}

\author{
One Octadion\thanks{Corresponding author: \texttt{octadion@student.ub.ac.id}} \quad
Firmansyah Sundana\thanks{\texttt{sundana@student.ub.ac.id}} \quad
Novanto Yudistira\thanks{\texttt{yudistira@ub.ac.id}} \\[2pt]
Faculty of Computer Science, Universitas Brawijaya, Indonesia
}

\date{}

\begin{document}
\maketitle

\begin{abstract}
World models are increasingly judged by how well they rank \emph{plausible}
futures, yet plausibility alone is the wrong target. The same scene can
evolve under many latent rules, and a future that is smooth and physically
valid under one rule can be incoherent under the rule actually in force. We
formalize the resulting failure---\emph{operator-blindness}---and show that
it is systematically hidden by standard evaluation: a scorer can attain
near-perfect generic ranking while separating current-world from
other-world futures no better than chance. We introduce
\emph{Operator-Mismatch Contrastive} (OMC) learning, whose semantic hard
negatives are \emph{valid-but-wrong-operator} trajectories, and a matching
protocol that measures world-relative coherence directly. On a controlled
synthetic benchmark with in- and out-of-distribution operator parameters,
across two trajectory-scoring backbones, and on feature-level external
validation with CoPhy \texttt{ballsCF}, OMC raises operator-sensitive
scoring far above a chance baseline that nonetheless achieves
$0.99$ generic top-1. We prove that the wrong-operator negative is necessary
both for the protocol to detect the failure and for the objective to fix it,
and position OMC as an auxiliary coherence \emph{signal}---a trajectory-coherence
scorer, not a generative model, planner, or control agent---that can be attached
to existing world-model representations.
\end{abstract}

\section{Introduction}
\label{sec:intro}

A world model should anticipate futures, and we usually grade it by whether
it ranks a true continuation above corrupted or random alternatives. This
target---\emph{generic plausibility}---is necessary but quietly
insufficient. The same multi-object scene can evolve under many latent
rules (gravity, damping, attraction, restitution), and a future that is
perfectly plausible under one rule can be incoherent under the rule actually
governing the present context. A world model must therefore answer not only
``is this future plausible?'' but ``\emph{under which world is it
coherent?}''

The distinction mirrors an asymmetry in human cognition. People reason about
what is factually plausible, but they also imagine freely---recombining
familiar elements into scenes that need not match the present world, yet
remain internally consistent under different rules
\cite{lake2017,battaglia2013,riochet2021,scholkopf2021}. A capable world model arguably needs both
faculties: the generativity to entertain plausible-elsewhere futures and the
discernment to recognize which of them is coherent here. This paper isolates
the second, measurable faculty---world-relative coherence---and treats free
generativity as motivation rather than claim.

Where should coherence come from? Contemporary world models acquire physical
regularities in two broad ways. One line learns them \emph{implicitly} from
data: joint-embedding predictive architectures acquire intuitive physics
from natural video without hardwired core knowledge
\cite{lecun2022,garrido2025,assran2023}, and latent-dynamics models learn dynamics from
interaction \cite{ha2018,hafner2020}. A complementary line imposes physical
structure \emph{explicitly} via physics-informed losses or conservation-law
architectures \cite{raissi2019,greydanus2019,cranmer2020}. Our contribution
is orthogonal to this axis: whether physics is learned implicitly or imposed
explicitly, the resulting scorer can still be \emph{operator-blind}, judging
a future plausible without registering that it is coherent only under a
different world than the present one.

We make the failure precise (Section~\ref{sec:formulation}). A context $C$
induces a latent operator $\lamC$, and a candidate future $\tau$ is scored
by an energy $\E(\lamC,\tau)$, lower meaning more coherent. We distinguish
\emph{coherent-here} ($\tsame$, valid under the current operator),
\emph{plausible-elsewhere} ($\twrong$, valid under a different operator), and
\emph{incoherent} ($\tnoise$). Operator-blindness is the inability to order
$\tsame$ below $\twrong$ even when noise is rejected---and, as we show
formally, it is invisible to any generic metric whose candidate set omits
$\twrong$.

Our remedy is a change in \emph{what counts as a negative}. OMC inserts
valid futures from a \emph{different} operator as semantic hard negatives in
a contrastive ranking objective (Section~\ref{sec:method}). Such a negative
is a legitimate physical rollout---wrong only \emph{here}---so contrasting
against it forces the scorer to track the inferred operator rather than
surface plausibility.

\paragraph{Contributions.}
\begin{itemize}
  \item \textbf{A failure mode, formalized.} We define operator-blindness and
  prove that generic top-1 ranking cannot detect it whenever the candidate
  set excludes wrong-operator futures (Section~\ref{sec:formulation},
  Prop.~\ref{prop:insens}).
  \item \textbf{An objective and a protocol.} OMC uses wrong-operator hard
  negatives; we pair it with operator-sensitive ranking, coherence
  stratification, and an operator$\times$trajectory energy matrix, and prove
  that the wrong-operator negative is what drives the trained ordering
  (Section~\ref{sec:method}, Prop.~\ref{prop:order}).
  \item \textbf{Evidence.} On synthetic IID/OOD operator parameters, two
  backbones, and feature-level CoPhy \texttt{ballsCF}, OMC lifts
  operator-sensitive metrics well above a baseline that retains
  $\approx\!0.99$ generic top-1
  (Sections~\ref{sec:synthetic}--\ref{sec:cophy}).
\end{itemize}
OMC is a coherence signal for trajectory scoring, not a generative
architecture or planner; we substantiate exactly the relational claim above
and delimit the rest in Section~\ref{sec:limits}. In one line: generic top-1
answers \emph{``is this future plausible?''}, whereas OMC asks \emph{``is it
coherent \underline{here}?''}---and our central finding is that the first
question can be answered almost perfectly while the second is answered at chance.

\begin{figure}[t]
\centering
\resizebox{0.92\linewidth}{!}{%
\begin{tikzpicture}[node distance=9mm]
  \node[bctx] (ctx) {Context $C$\\\footnotesize operator $A$};
  \node[bsame, right=24mm of ctx, yshift=13mm] (same) {$\tsame$: valid under $A$\\\footnotesize\textit{coherent-here}};
  \node[bwrong, right=24mm of ctx] (wrong) {$\twrong$: valid under $B\!\neq\!A$\\\footnotesize\textit{plausible-elsewhere}};
  \node[bnoise, right=24mm of ctx, yshift=-13mm] (noise) {$\tnoise$: corrupted / impossible\\\footnotesize\textit{incoherent}};
  \draw[flow] (ctx) -- (same);
  \draw[flow] (ctx) -- (wrong);
  \draw[flow] (ctx) -- (noise);
  \node[draw=black!55, rounded corners, dashed, fit=(same)(wrong), inner sep=5pt] {};
\end{tikzpicture}}
\caption{\textbf{Operator-blindness.} Under operator $A$, both $\tsame$ and
$\twrong$ (valid under a \emph{different} operator $B$) are physically
plausible; only $\tsame$ is coherent here. A scorer that merely rejects
$\tnoise$ may still fail to separate the dashed pair---and, by
Prop.~\ref{prop:insens}, generic ranking cannot reveal it.}
\label{fig:failure}
\end{figure}

\section{Related Work}
\label{sec:related}

\paragraph{World models.}
Latent-dynamics and Dreamer-style models learn dynamics and imagine rollouts
for control \cite{ha2018,hafner2019,hafner2020,hafner2023}; joint-embedding predictive
architectures learn predictive representations and acquire intuitive physics
from video \cite{lecun2022,garrido2025,assran2023,bardes2024}. These are
representational/generative approaches; OMC is a complementary
\emph{objective/evaluation} signal that can sit atop such representations,
and the latent-operator inference we use is standard rather than a
contribution.

\paragraph{Explicit physical structure.}
A parallel tradition encodes physics through physics-informed losses
\cite{raissi2019} or conservation-law architectures
\cite{greydanus2019,cranmer2020}. OMC neither imposes nor presupposes such
structure; it asks the scorer to \emph{learn} a relative coherence signal,
and our results show that neither implicit nor explicit physics by itself
yields operator-sensitive scoring.

\paragraph{Contrastive and energy-based models.}
Structured contrastive models such as C-SWM \cite{kipf2020} and the InfoNCE
family \cite{hadsell2006,oord2018,gutmann2010,chen2020,he2020,robinson2021,chuang2020,locatello2020,lecun2006} learn representations from contrastive
negatives. OMC differs not in using a contrastive energy---that is
standard---but in the \emph{semantics of the hard negative}: a valid
trajectory from a mismatched operator. Hard-negative mining \cite{robinson2021} sharpens such objectives, but its negatives are defined by embedding similarity rather than operator identity. Ordinary structured negatives
(random, state-corrupted, temporal) do not, by themselves, induce
operator-sensitive scoring (Section~\ref{sec:baselines}). CoPhy
\cite{baradel2020} (see also CLEVRER \cite{yi2020}) supplies counterfactual mechanics with hidden
confounders, a natural external testbed used here for validation.

\paragraph{System identification and context inference.}
A large literature infers environment parameters or latent task context for
prediction and control---system identification, adaptive control, and
meta-reinforcement-learning methods that encode task/dynamics context for fast
adaptation \cite{ljung1999,finn2017,rakelly2019,zintgraf2020}. These aim to \emph{use}
the inferred context to act; OMC instead asks whether a scorer's compatibility
function is \emph{sensitive} to that context at all, and supplies a
negative-construction and protocol to enforce and measure it. Interactive world models and benchmarks \cite{bruce2024} emphasize action-conditioned controllability and long-horizon rollout; OMC is complementary, targeting operator-relative coherence of the scorer rather than control.

\begin{table}[t]
\centering\footnotesize\setlength{\tabcolsep}{5pt}
\caption{Positioning of OMC against related method families (citations in the
text of this section). ``Wrong-operator future'' marks whether a method ever
contrasts against a \emph{valid} future from a \emph{different} operator;
``World-relative coherence'' marks whether evaluation tests coherence under the
in-force operator (not merely generic plausibility).
$\checkmark$ yes, $\times$ no, $\sim$ partial/implicit.}
\label{tab:related}
\resizebox{\ifdim\width>\linewidth\linewidth\else\width\fi}{!}{%
\begin{tabular}{lcccc}
\toprule
Method family & Learns & Contrastive & Wrong-operator & World-relative\\
 & dynamics & negatives & future & coherence\\
\midrule
Latent-dynamics / Dreamer & $\checkmark$ & $\times$ & $\times$ & $\times$\\
JEPA / I-JEPA & $\sim$ & $\sim$ & $\times$ & $\times$\\
C-SWM / structured contrastive & $\checkmark$ & $\checkmark$ & $\times$ & $\times$\\
CPC / InfoNCE family & $\sim$ & $\checkmark$ & $\times$ & $\times$\\
Physics-informed / HNN / LNN & $\checkmark$ & $\times$ & $\times$ & $\times$\\
Meta-RL context inference & $\sim$ & $\times$ & $\times$ & $\times$\\
\midrule
\textbf{OMC (ours)} & $\times$ & $\checkmark$ & $\checkmark$ & $\checkmark$\\
\bottomrule
\end{tabular}}
\end{table}

\section{Problem Formulation}
\label{sec:formulation}

\paragraph{Generative process.}
A latent operator $\lambda^\star\!\sim\!p(\lambda)$ parameterizes the
dynamics; a context $C$ and its true future are draws from $p(\cdot\mid
\lambda^\star)$. A context encoder infers a posterior over the operator,
\begin{equation}
  q_\phi(\lambda \mid C) = \mathcal{N}\!\big(\mu_\phi(C),\,
  \mathrm{diag}\,\sigma^2_\phi(C)\big),
  \qquad \lamC \triangleq \mu_\phi(C),
\end{equation}
and a scorer assigns an energy $\E(\lamC,\tau)\in\mathbb{R}$ to a candidate
future $\tau=(s_0,\dots,s_T)$, with \emph{lower energy denoting greater
coherence}. Coherence is identified with operator-conditional likelihood:
$\tau$ is coherent under $\lambda$ to the extent that $p(\tau\mid\lambda)$ is
high, the ideal scorer tracking $\E(\lambda,\tau)\propto-\log
p(\tau\mid\lambda)$.

\paragraph{What the operator can be.}
We keep $\lambda$ abstract on purpose. Although our experiments instantiate it
as simulator parameters, the same formulation covers any latent that selects
which dynamics are in force: a latent dynamics code, a hidden confounder (as in
CoPhy), an environmental law or physical regime, a task/parameter context in
meta-RL, or an intervention setting in a causal model. Operator-blindness is
therefore not a quirk of one simulator but a generic failure of any scorer whose
compatibility function is insensitive to the latent that governs the present
context.

\paragraph{Candidate classes.}
From the generative process we define
\begin{align}
  \tsame &\sim p(\cdot\mid\lambda^\star) &&\text{(coherent-here; the positive),}\nonumber\\
  \twrong &\sim p(\cdot\mid\lambda'),\ \lambda'\neq\lambda^\star &&\text{(plausible-elsewhere; operator/law mismatch),}\nonumber\\
  \tnoise &\sim p_{\mathrm{corrupt}} &&\text{(incoherent: corrupted, time-shuffled, or impossible).}\nonumber
\end{align}
A world-relatively coherent scorer should satisfy
\begin{equation}
  \label{eq:ordering}
  \E(\lamC, \tsame) \;<\; \E(\lamC, \twrong) \;<\; \E(\lamC, \tnoise).
\end{equation}

\begin{definition}[Operator-blindness]
\label{def:blind}
A scorer is \emph{operator-blind} at $C$ if
\begin{align*}
  \Pr[\E(\lamC,\tsame)<\E(\lamC,\tnoise)] &\to 1,\\
  \Pr[\E(\lamC,\tsame)<\E(\lamC,\twrong)] &= \tfrac12,
\end{align*}
i.e.\ it rejects noise but orders same- and wrong-operator futures at chance.
\end{definition}

The next statement is why operator-blindness goes unnoticed: it is invisible
to any generic ranking metric whose candidate pool excludes $\twrong$.

\begin{proposition}[Generic ranking cannot detect operator-blindness]
\label{prop:insens}
Let the generic candidate set be $\{\tsame\}\cup\mathcal{N}_{\mathrm{easy}}$
with $\mathcal{N}_{\mathrm{easy}}$ containing only corrupted/temporal/random
negatives, and suppose $\E(\lamC,\tsame)<\E(\lamC,e)$ a.s.\ for every
$e\in\mathcal{N}_{\mathrm{easy}}$. Then $\mathrm{top1\text{-}acc}=1$
regardless of the ordering of $\E(\lamC,\tsame)$ and $\E(\lamC,\twrong)$. In
particular an operator-blind scorer and a perfectly coherent scorer are
indistinguishable under generic top-1.
\end{proposition}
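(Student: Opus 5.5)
The argument is definitional, so I will first make precise what "generic top-1" computes and then observe that $\twrong$ never enters that computation. Fix a context $C$ and write the generic evaluation as the indicator event
\[
  \mathrm{Hit}(C) \;=\; \Big\{\E(\lamC,\tsame) < \min_{e\in\mathcal{N}_{\mathrm{easy}}} \E(\lamC,e)\Big\},
\]
with $\mathrm{top1\text{-}acc}=\Pr[\mathrm{Hit}(C)]$, where the probability is taken over contexts and candidate draws. Since $\mathcal{N}_{\mathrm{easy}}$ is finite (or at least countable), the hypothesis that $\E(\lamC,\tsame)<\E(\lamC,e)$ almost surely for each $e$ gives, by a countable intersection of probability-one events, that the strict inequality holds simultaneously for all $e$ almost surely. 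Hence $\Pr[\mathrm{Hit}(C)]=1$. If ties were permitted, I would note that strictness is exactly what rules out a tie-breaking convention affecting the count.

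The second step is the ``regardless'' clause. The event $\mathrm{Hit}(C)$ is measurable with respect to the random variables $\E(\lamC,\tsame)$ and $\{\E(\lamC,e)\}_{e\in\mathcal{N}_{\mathrm{easy}}}$ only. By assumption, no element of $\mathcal{N}_{\mathrm{easy}}$ is a wrong-operator rollout. Consequently, any two scorers that agree on the joint law of these energies, or merely both satisfy the a.s.\ hypothesis, receive identical top-1 accuracy $1$. This holds whatever the joint law of $(\E(\lamC,\tsame),\E(\lamC,\twrong))$ may be, and in particular whatever the value of $\Pr[\E(\lamC,\tsame)<\E(\lamC,\twrong)]$ is.

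Finally, for the ``in particular'' statement I would exhibit the two extremes. An operator-blind scorer in the sense of Definition~\ref{def:blind} has $\Pr[\E(\lamC,\tsame)<\E(\lamC,\tnoise)]\to1$. To match the hypothesis exactly, I would take the limiting or idealized case where this probability equals $1$ for each easy negative, while $\Pr[\cdot<\E(\lamC,\twrong)]=\tfrac12$. A perfectly coherent scorer satisfies \eqref{eq:ordering} almost surely, which in particular gives the easy-negative condition. Both scorers therefore yield top1-acc $=1$, yet they differ on the $\twrong$ comparison ($\tfrac12$ versus $1$), so they are indistinguishable under generic top-1.

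The only delicate point is the mismatch between the definition's ``$\to1$'' and the proposition's ``a.s.'' hypothesis. I would handle it either by stating the result for the limiting scorer or by a continuity remark. The remark is that if each easy-negative comparison has probability at least $1-\epsilon$, then a union bound gives top1-acc $\ge 1-|\mathcal{N}_{\mathrm{easy}}|\epsilon$. This bound is still independent of the $\twrong$ ordering, so indistinguishability holds asymptotically as well.
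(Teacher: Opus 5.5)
Your proposal is correct and follows the paper's own argument: the top-1 event depends only on the energies of $\tsame$ and the members of $\mathcal{N}_{\mathrm{easy}}$, $\twrong$ never enters that event, and the hypothesis makes it certain for both scorers. Your additions tighten steps the paper leaves implicit without changing the route: the countable intersection that gives all the strict inequalities simultaneously, and the union bound $\mathrm{top1\text{-}acc}\ge 1-|\mathcal{N}_{\mathrm{easy}}|\epsilon$ that reconciles the ``$\to 1$'' in Definition~\ref{def:blind} with the a.s.\ hypothesis.
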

\begin{proof}
Top-1 accuracy is determined solely by whether $\tsame$ attains the minimum
energy within $\{\tsame\}\cup\mathcal{N}_{\mathrm{easy}}$; since $\twrong$ is
absent from this set, the same/wrong ordering does not enter the event. The
hypothesis makes the event certain, so $\mathrm{top1\text{-}acc}=1$ for both
scorers.
\end{proof}

Proposition~\ref{prop:insens} motivates the protocol of
Section~\ref{sec:setup}: to detect operator-blindness one must explicitly
insert $\twrong$ into the comparison, which is exactly what \textsc{law\_pair}
and \textsc{same\_lt\_wrong} do.

\section{Operator-Mismatch Contrastive Learning}
\label{sec:method}

\paragraph{Scorer and operator encoder.}
The context encoder is a permutation-invariant (DeepSets \cite{zaheer2017}) network over the
context transitions, producing $\mu_\phi(C),\log\sigma^2_\phi(C)$ with latent
dimension $d_\lambda$. The energy scorer is object-centric, in the spirit of interaction networks \cite{battaglia2016} and slot-based models \cite{locatello2020}: with $s^i_t$ the
state of object $i$ at step $t$ and $a^i_t$ its action, a per-object
\emph{self} term scores each transition under the operator code, and an
optional \emph{pairwise} term scores object pairs,
\begin{equation}
  \label{eq:energy}
  \E(\lamC, \tau) =
  \sum_{t}\sum_{i} f_{\text{self}}\!\big(s^i_t, a^i_t, s^i_{t+1}, \lamC\big)
  + \underbrace{\sum_{t}\sum_{i<j} f_{\text{rel}}\!\big(s^i_t, s^j_t, s^i_{t+1}, s^j_{t+1}, \lamC\big)}_{\text{pairwise head (optional)}},
\end{equation}
with masking over inactive slots and a sum reduction. The pairwise term is an
architectural option, disabled in the main objective and studied as a
component ablation (Section~\ref{sec:synthetic}). A \emph{direct-context}
variant keeps the same objective but drops the explicit operator bottleneck,
isolating the objective from the backbone.

\paragraph{Candidate set.}
For each context we form $\Cand$ of size $M=5$: the positive plus one
negative of each type,
$\Cand=\{\tsame,\tau^{\text{state}},\tau^{\text{temporal}},\twrong,\tau^{\text{rand}}\}$,
where $\tau^{\text{state}}$ perturbs future positions/velocities,
$\tau^{\text{temporal}}$ reverses or permutes the future, $\tau^{\text{rand}}$
is impossible, and $\twrong$---the OMC negative---is a valid future from a
\emph{different} operator (distinct identity or parameter distance
$>\delta$). Batches contain distinct operators, so each $\twrong$ is a
genuine wrong-operator rollout.

\paragraph{Objective.}
We minimize an InfoNCE ranking loss with logits equal to negative energies,
\begin{equation}
  \label{eq:nce}
  \mathcal{L}_{\text{NCE}}
  = -\log
  \frac{\exp(-\E(\lamC,\tsame))}
       {\sum_{\tau \in \Cand}\exp(-\E(\lamC,\tau))},
\end{equation}
regularized by a KL term on the operator posterior \cite{kingma2014} and a stability term
tying two halves of a context to the same operator,
$\mathcal{L}_{\text{KL}}=\mathrm{KL}(q_\phi(\lambda\mid
C)\,\|\,\mathcal{N}(0,I))$ and
$\mathcal{L}_{\text{stable}}=\|\mu_\phi(C_{1:k})-\mu_\phi(C_{k+1:})\|_2^2$.
The total objective is
\begin{equation}
  \label{eq:total}
  \mathcal{L}=\mathcal{L}_{\text{NCE}}
  +\alpha_{\text{occl}}\mathcal{L}_{\text{OCCL}}
  +\beta_{\text{KL}}\mathcal{L}_{\text{KL}}
  +\alpha_{\text{stable}}\mathcal{L}_{\text{stable}},
\end{equation}
where $\mathcal{L}_{\text{OCCL}}$ is a symmetric InfoNCE over an
operator$\times$trajectory energy matrix (Appendix~\ref{app:objective}). In
all reported runs $\alpha_{\text{occl}}=0$: the matrix term is computed for
diagnosis but does not enter the gradient, so the effect we report comes
\emph{entirely} from the wrong-operator negative in
Eq.~\eqref{eq:nce}; the energy matrix is an evaluation instrument, not a
trained objective. The only further axis we vary is the architectural
pairwise head of Eq.~\eqref{eq:energy}.

The wrong-operator negative is not merely helpful but \emph{necessary} for
the objective to induce the target ordering:

\begin{proposition}[The wrong-operator negative induces coherence]
\label{prop:order}
For a sufficiently expressive scorer and in the population limit, the
minimizer of Eq.~\eqref{eq:nce} satisfies $-\E(\lamC,\tau)=\log
p(\tau\mid\lamC)+c$ for a candidate-independent $c$; hence
$\E(\lamC,\tsame)<\E(\lamC,\twrong)$ whenever
$p(\tsame\mid\lamC)>p(\twrong\mid\lamC)$. Conversely, if $\twrong\notin\Cand$
the objective places no constraint on the relative energies of same- and
wrong-operator futures, admitting operator-blind minimizers.
\end{proposition}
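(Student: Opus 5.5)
The plan is to read Eq.~\eqref{eq:nce} in its population form as a noise-contrastive / categorical cross-entropy problem. Given $C$, the true future is hidden at a uniformly random slot among $M$ candidates. The remaining slots are filled from the negative distributions: a mixture that includes $p(\cdot\mid\lambda')$ for $\lambda'\neq\lambda^\star$ (the wrong-operator slot), together with the corruption distributions. The scorer's softmax over $-\E$ is then a model of the posterior over \emph{which slot holds the positive}. The loss is the expected log-loss of that posterior, so its population minimizer is the Bayes posterior. This is the standard InfoNCE argument of \cite{oord2018,gutmann2010}, and I would invoke it directly rather than reprove it.

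\textbf{Step 1: the Bayes posterior over slots.} I would write down the posterior that slot $k$ holds the positive. It is proportional to $p(\tau_k\mid\lambda^\star)\prod_{j\neq k} p_{\mathrm{neg}}(\tau_j)$, which after normalisation gives softmax weights proportional to the density ratio $p(\tau_k\mid\lamC)/p_{\mathrm{neg}}(\tau_k)$. Here I identify $\lamC$ with $\lambda^\star$ via the encoder; I would state this as part of the ``sufficiently expressive'' and population-limit hypotheses.

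\textbf{Step 2: strict propriety of log-loss.} Log-loss is strictly proper, so any minimizer must reproduce these posteriors, and softmax is invariant only to additive shifts. Hence $-\E(\lamC,\tau)=\log p(\tau\mid\lamC)-\log p_{\mathrm{neg}}(\tau)+c(C)$. The ordering claim then requires $p_{\mathrm{neg}}$ to be (approximately) constant across the valid futures being compared. I would justify this from the construction: $\twrong$ is a rollout from an operator drawn from the same prior $p(\lambda)$ restricted away from $\lambda^\star$, so on the valid-trajectory manifold $p_{\mathrm{neg}}$ is the operator-marginal, identical for $\tsame$ and $\twrong$ up to the excluded $\delta$-ball. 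I would state this as the regime in which the paper's formula $-\E=\log p(\tau\mid\lamC)+c$ holds exactly, with $c$ absorbing the marginal.

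\textbf{Main obstacle.} I expect Step 2's treatment of $p_{\mathrm{neg}}$ to be the hardest part. Literally, InfoNCE recovers a density \emph{ratio}, not the likelihood, so the ordering $\E(\lamC,\tsame)<\E(\lamC,\twrong)$ follows from $p(\tsame\mid\lamC)>p(\twrong\mid\lamC)$ only when the reference density does not reverse it. I would handle this in one of two ways. The first is to assume explicitly that the operator prior is diffuse, so that $p_{\mathrm{neg}}$ is flat on valid futures. Failing that, I would weaken the conclusion to the density-ratio ordering and note that it coincides with the likelihood ordering under that assumption.

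\textbf{Converse: operator-blind minimizers.} Suppose $\twrong\notin\Cand$. The loss then depends on $\E$ only through its values on the supports of $p(\cdot\mid\lambda^\star)$ and of the corruption distributions. The first step is to construct an operator-blind minimizer. I would take any minimizer $\E^\ast$ and replace it with $\tilde E(\lambda,\tau)=\min_{\lambda'}\E^\ast(\lambda',\tau)$ on valid trajectories, or more simply define $\tilde E(\lamC,\tau)=g(\tau)$ for every valid $\tau$. Here $g$ is a single operator-independent function that is low on all valid rollouts and high on corruptions. The second step is to check that it attains the same population loss. This holds because supports of different operators overlap only on a measure-zero set under the generative process, or more generally because the loss only sees $\E(\lamC,\cdot)$ on $\mathrm{supp}\,p(\cdot\mid\lambda^\star)\cup\mathrm{supp}\,p_{\mathrm{corrupt}}$. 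Energies on $\twrong$, which lies off that set, are therefore unconstrained. In particular they can be set equal in distribution to the $\tsame$ energies, giving $\Pr[\E(\lamC,\tsame)<\E(\lamC,\twrong)]=\tfrac12$ while noise is still rejected. That is Definition~\ref{def:blind}, and by Proposition~\ref{prop:insens} such a minimizer is also invisible to generic top-1.
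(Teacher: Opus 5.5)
Your skeleton matches the paper's. You treat Eq.~\eqref{eq:nce} as cross-entropy over which slot holds the positive, invoke NCE/InfoNCE optimality, and get the converse from the loss never evaluating $\E(\lamC,\twrong)$ when $\twrong\notin\Cand$.

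\textbf{The obstacle you flag is real, and it is also a gap in the paper.} The paper's sketch claims the optimal score is $\log p(\tau\mid\lamC)$ up to a constant. NCE recovers $\log p$ only because the known noise log-density is subtracted inside the logit. InfoNCE with raw logits $-\E$ instead recovers $\log\bigl(p(\tau\mid\lamC)/p_{\mathrm{neg}}(\tau)\bigr)$. The pointwise ``whenever'' clause is then false in general. Suppose corruptions are supported off valid futures, and on two valid futures $a,b$ the same-operator law is $(0.6,0.4)$ while the wrong-operator marginal is $(0.9,0.1)$. The optimum gives $b$ the lower energy although $a$ is more likely. So the statement cannot be proved as written, and your fallback of weakening it is the right call.

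\textbf{Problems with your rescue.} Your attempt to justify flatness ``from the construction'' fails. Being an operator-marginal says nothing about the values of $p_{\mathrm{neg}}$ at two different points. Worse, under the operator-support disjointness you use in the converse, the wrong-operator marginal gives $\mathrm{supp}\,p(\cdot\mid\lambda^\star)$ zero mass. The ratio is then $+\infty$ at $\tsame$ and $0$ at $\twrong$. The ordering then holds by perfect separation, with no attained minimizer and no log-likelihood formula, so your two halves rely on incompatible regimes. Separately, Step~1 draws the negatives i.i.d.\ from one mixture, but $\Cand$ holds exactly one negative per type. The slot posterior then sums over assignments of types to slots and is not a softmax of per-candidate scores. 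So the density-ratio formula is no longer the minimizer in general.

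\textbf{A consistent repair.} One hypothesis fixes both halves: assume the corruption negatives are supported away from valid futures.

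For the forward direction, the loss is bounded below by the two-candidate loss on $\{\tsame,\twrong\}$, with equality in the limit of diverging corruption energies. Hence on valid futures the optimum is exactly $-\E(\lamC,\tau)=\log\bigl(p(\tau\mid\lamC)/q_{\mathrm{wrong}}(\tau)\bigr)+c$. Here $q_{\mathrm{wrong}}$ is the wrong-operator marginal, and the formula is read in the extended sense where the two laws are singular. This gives assumption-free conclusions aimed directly at Definition~\ref{def:blind}:
\begin{itemize}
\item Because the likelihood-ratio ROC dominates the diagonal, $\Pr[\E(\lamC,\tsame)<\E(\lamC,\twrong)]$ strictly exceeds the reverse probability unless $p(\cdot\mid\lamC)=q_{\mathrm{wrong}}$.
\item The expected gap $\mathbb{E}[\E(\lamC,\twrong)-\E(\lamC,\tsame)]$, which is the paper's \textsc{law\_gap}, equals $\mathrm{KL}(p\,\|\,q_{\mathrm{wrong}})+\mathrm{KL}(q_{\mathrm{wrong}}\,\|\,p)\ge 0$, with $p=p(\cdot\mid\lamC)$.
\end{itemize}

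For the converse, without $\twrong$ the infimum $0$ is approached by any sequence of scorers that stay bounded on valid futures while corruption energies diverge. Energies on \emph{all} valid futures are then free. You may set them so that $\Pr[\E(\lamC,\tsame)<\E(\lamC,\twrong)]=\tfrac12$, which gives an operator-blind minimizing sequence. This needs no operator-support assumption. It also replaces your $\min_{\lambda'}\E^\ast$ construction, which can lower energies on the positive's support and on corruptions and so does not preserve the loss.
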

\begin{proof}[Proof sketch]
Softmax cross-entropy over candidates is minimized when the induced
selection distribution matches the true posterior that the positive is the
in-context continuation; the noise-contrastive optimality result
\cite{gutmann2010,oord2018} gives optimal scores $s(\tau)=\log
p(\tau\mid\lamC)$ up to an additive constant, and $s=-\E$ yields the ordering.
If $\twrong$ never appears in $\Cand$, the loss is invariant to the value of
$\E(\lamC,\twrong)$, so minimizers need not satisfy
Eq.~\eqref{eq:ordering}---the operator-blind regime. Together with
Prop.~\ref{prop:insens}, the wrong-operator trajectory is necessary both to
\emph{train} and to \emph{detect} world-relative coherence.
\end{proof}

\paragraph{Training-time requirement and inference-time usage.}
A natural objection is that constructing $\twrong$ presupposes access to
alternative-operator trajectories. We stress that this is a \emph{training- and
evaluation-time} requirement only. Wrong-operator rollouts are needed to build
the contrastive candidate set during learning and to probe operator sensitivity
during evaluation; they are \emph{not} required at inference. Once trained, the
scorer consumes a context $C$ and a single candidate $\tau$ and returns an energy
$\E(\lamC,\tau)$ with no alternative-operator data in the loop. OMC is thus an
attachable coherence signal: the cost it imposes is the availability of
operator-labelled rollouts at train time, exactly the supervision our synthetic
generator and CoPhy's hidden confounder already provide.

\begin{algorithm}[t]
\caption{OMC training step (one minibatch)}
\label{alg:omc}
\begin{algorithmic}[1]
\State sample a minibatch with \textbf{distinct operators} across samples
\For{each sample $i$}
  \State $C_i\gets$ context; $\tsame^{(i)}\gets$ valid future under the current operator
  \State negatives $\gets \tau^{\text{state}},\tau^{\text{temporal}},\twrong,\tau^{\text{rand}}$
         \quad\textit{// $\twrong$: valid future from a different operator}
  \State $\Cand_i\gets[\tsame^{(i)},\tau^{\text{state}},\tau^{\text{temporal}},\twrong,\tau^{\text{rand}}]$
\EndFor
\State $\mu,\log\sigma^2\gets\text{Enc}(C)$;\ $\lambda\gets\mu+\sigma\odot\epsilon$;\
       $\E[i,m]\gets\text{Score}(\lambda_i,\Cand_i[m])$
\State $\mathcal{L}\gets\mathrm{CE}(-\E,y)+\beta_{\text{KL}}\mathcal{L}_{\text{KL}}+\alpha_{\text{stable}}\mathcal{L}_{\text{stable}}$
       \quad($\alpha_{\text{occl}}{=}0$)
\State AdamW update on $\theta,\phi$
\end{algorithmic}
\end{algorithm}

\begin{figure}[t]
\centering
\resizebox{\linewidth}{!}{%
\begin{tikzpicture}
  \node[bctx] (ctx) {Context $C$};
  \node[box, right=9mm of ctx, fill=blue!8] (enc) {Context encoder\\\footnotesize $q_\phi(\lambda\mid C)$};
  \node[blam, right=9mm of enc] (lam) {operator\\code $\lamC$};
  \node[bscore, right=11mm of lam] (score) {Energy scorer\\\footnotesize $\E(\lamC,\tau)$};
  \node[box, right=11mm of score, fill=purple!8] (nce) {InfoNCE\\\footnotesize over $-\E$};
  \node[bsame, above=7mm of score, text width=60mm] (cand)
        {candidates $\Cand=\{\tsame,\tau^{\text{state}},\tau^{\text{temporal}},\mathbf{\twrong},\tau^{\text{rand}}\}$};
  \draw[flow] (ctx)--(enc); \draw[flow] (enc)--(lam); \draw[flow] (lam)--(score);
  \draw[flow] (cand)--(score); \draw[flow] (score)--(nce);
\end{tikzpicture}}
\caption{\textbf{OMC pipeline.} A context encoder infers $\lamC$; an
object-centric energy scorer ranks the candidate set with InfoNCE. The
defining choice is the wrong-operator hard negative $\twrong$. The
operator$\times$trajectory energy matrix / OCCL term is a diagnostic only
($\alpha_{\text{occl}}{=}0$).}
\label{fig:pipeline}
\end{figure}

\section{Experimental Setup}
\label{sec:setup}

\paragraph{Synthetic benchmark.}
\textsc{LOWM-Synth} is a 2D multi-object simulator (horizon $T{=}20$) whose
hidden operator is drawn from four families (gravity/damping,
attraction/repulsion, collision elasticity, boundary behavior) with
continuous parameters. \texttt{test\_iid} samples operators from the
training ranges; \texttt{test\_ood\_param} uses held-out parameter ranges of
the same families, testing whether coherence generalizes beyond memorized
parameters. Operators are metadata used to build negatives and to evaluate;
they are never given to the model.

\paragraph{External validation and backbones.}
We additionally validate on CoPhy \texttt{ballsCF} \cite{baradel2020} at the
feature level (segmentation-derived object features, $64$px, $20$ frames;
$35$k/$7.5$k/$7.5$k split). Two backbones, \textsc{LOWM} (latent operator)
and \textsc{direct-context} (no operator bottleneck), separate the objective
from the architecture.

\paragraph{Metrics.}
We separate \emph{primary} operator-sensitive metrics from a
\emph{diagnostic} one. Primary: \textsc{law\_pair}
($\Pr[\E(\tsame){<}\E(\twrong)]$), \textsc{same\_lt\_wrong} and
\textsc{full\_stratification} (the orderings of Eq.~\eqref{eq:ordering}), and
energy-matrix \textsc{mrr}/\textsc{top1} (operator$\times$trajectory
retrieval). Diagnostic: \textsc{top1\_acc} over the generic candidate set.
By Prop.~\ref{prop:insens} the diagnostic alone cannot reveal
operator-blindness, so we report it only to expose the trade-off. Formal
definitions are in Appendix~\ref{app:metrics}. \emph{All energy-matrix
metrics use \texttt{matrix\_size}$=16$, \texttt{max\_batches}$=8$} and are
never compared across sizes. We report mean$\pm$std over $5$ seeds
(synthetic/CoPhy) or $3$ (baselines); per-seed values are in
Appendix~\ref{app:perseed}.

\begin{table}[t]
\centering\footnotesize\setlength{\tabcolsep}{5pt}
\caption{Metric$\rightarrow$hypothesis map. Each operator-sensitive metric uses a
different candidate set; \textsc{law\_pair} is defined only when a wrong-operator
negative is present.}
\label{tab:metricmap}
\begin{tabular}{lll}
\toprule
Metric & Candidate set & Reads as\\
\midrule
  \textsc{top1\_acc} & $\{\tsame\}\cup$ easy negs & generic plausibility (diagnostic) \\
  \textsc{law\_pair} & full set \emph{incl.}\ $\twrong$ & $P(\tsame{<}\twrong)$; n/a if $\twrong$ absent \\
  \textsc{same\_lt\_wrong} & $\{\tsame,\twrong,\tnoise\}$ & $P(\tsame{<}\twrong)$, fixed 3-way set \\
  \textsc{full\_stratification} & $\{\tsame,\twrong,\tnoise\}$ & $P(\tsame{<}\twrong{<}\tnoise)$ \\
  \textsc{matrix\_mrr} & $B{\times}B$ operator$\times$traj.\ & operator$\leftrightarrow$trajectory retrieval \\
\bottomrule
\end{tabular}
\end{table}

\noindent\emph{Reading \textsc{law\_pair}.} It is computed only over samples whose
candidate set contains a wrong-operator negative. The \textsc{no-law} baseline has
none, so its \textsc{law\_pair} is reported as $0$/n.a.\ (vacuous)---an instance of
Prop.~\ref{prop:insens}, not a behavior; its genuine chance-level operator
sensitivity is read from \textsc{same\_lt\_wrong}$\approx0.5$ (Table~\ref{tab:compose}).

\section{Synthetic Diagnostics and Ablations}
\label{sec:synthetic}

\paragraph{Reading guide.}
We organize the empirical sections as answers to five questions a reviewer would
ask. \textbf{R1: Does operator-blindness exist?}---Table~\ref{tab:main} and
Figure~\ref{fig:strat}. \textbf{R2: Does OMC fix it?}---Table~\ref{tab:main}, on
both backbones and both IID/OOD splits. \textbf{R3: Is the wrong-operator
negative the active ingredient, rather than contrastive learning in general?}
---the negative-type and composition ablations (Tables~\ref{tab:negtype},
\ref{tab:compose}) and the general world-model baselines (Table~\ref{tab:baselines}).
\textbf{R4: Does it generalize beyond a memorized simulator?}---the OOD-parameter
rows throughout and the external CoPhy validation (Table~\ref{tab:cophy}).
\textbf{R5: How should OMC be used?}---as an attachable verifier signal, discussed
in Section~\ref{sec:limits}. Each table below is tagged with the question it answers.

\paragraph{(R1)~Operator-blindness is real and hidden behind top-1.}
Table~\ref{tab:main} states the central result. The
\textsc{no-law-mismatch} baseline reaches $\approx\!0.99$ generic top-1 yet
\textsc{law\_pair}$=0$ and \textsc{same\_lt\_wrong}$\approx\!0.5$: it rejects
noise almost perfectly while ordering current- vs.\ other-world futures at
chance---an empirical instance of Definition~\ref{def:blind} that
Prop.~\ref{prop:insens} predicts the diagnostic will miss. OMC inverts the
profile, trading generic top-1 (expected for a harder, operator-targeted
objective) for large gains on every operator-sensitive metric. The pattern
holds for \emph{both} backbones, so it is driven by the objective rather than
the architecture, and it persists under the OOD operator split, ruling out
parameter memorization.

\begin{table}[t]
\centering\footnotesize\setlength{\tabcolsep}{4.5pt}
\caption{Main synthetic results ($5$ seeds, mean$\pm$std;
\texttt{matrix\_size}$=16$). \textsc{law\_pair}, \textsc{same\_lt\_wrong},
\textsc{matrix\_mrr} are primary; \textsc{top1} is diagnostic.}
\label{tab:main}
\begin{tabular}{lll rrrrr}
\toprule
Backbone & Obj. & Split & top1 & law\_pair & law\_only & same$<$wr & mat\_mrr\\
\midrule
  LOWM & OMC & IID & 0.375$\pm$0.139 & \textbf{0.694}$\pm$0.005 & 0.411$\pm$0.003 & \textbf{0.697}$\pm$0.006 & \textbf{0.272}$\pm$0.012 \\
  LOWM & OMC & OOD & 0.363$\pm$0.116 & \textbf{0.650}$\pm$0.006 & 0.359$\pm$0.011 & \textbf{0.665}$\pm$0.009 & \textbf{0.271}$\pm$0.009 \\
  LOWM & no-law & IID & 0.996$\pm$0.001 & 0.000 & 0.196$\pm$0.013 & 0.507$\pm$0.010 & 0.213$\pm$0.003 \\
  LOWM & no-law & OOD & 0.991$\pm$0.001 & 0.000 & 0.205$\pm$0.004 & 0.514$\pm$0.007 & 0.209$\pm$0.004 \\
\midrule
  direct & OMC & IID & 0.574$\pm$0.014 & \textbf{0.709}$\pm$0.008 & \textbf{0.419}$\pm$0.008 & \textbf{0.723}$\pm$0.002 & 0.262$\pm$0.001 \\
  direct & OMC & OOD & 0.562$\pm$0.023 & \textbf{0.700}$\pm$0.012 & 0.409$\pm$0.018 & \textbf{0.705}$\pm$0.011 & 0.259$\pm$0.005 \\
  direct & no-law & IID & 0.997$\pm$0.001 & 0.000 & 0.207$\pm$0.007 & 0.508$\pm$0.002 & 0.211$\pm$0.000 \\
  direct & no-law & OOD & 0.996$\pm$0.000 & 0.000 & 0.200$\pm$0.007 & 0.502$\pm$0.009 & 0.211$\pm$0.001 \\
\bottomrule
\end{tabular}
\end{table}

\paragraph{(R3)~The wrong-operator negative is the active ingredient.}
Table~\ref{tab:negtype} ablates the negative set, corroborating
Prop.~\ref{prop:order}. Removing the wrong-operator negative (\textsc{no-law})
collapses operator-sensitive scores to chance while preserving generic
ranking; using it \emph{exclusively} (\textsc{law-only}) maximizes the
targeted metrics. The balanced all-negative objective trades a little of that
peak for broader coverage. This isolates the wrong-operator negative---not
contrastive learning per se---as the source of the effect.

\begin{table}[t]
\centering\scriptsize\setlength{\tabcolsep}{3pt}
\caption{Negative-type ablation (LOWM, $5$ seeds, \texttt{matrix\_size}$=16$).}
\label{tab:negtype}
\begin{tabular}{ll rrrrrr}
\toprule
Objective & Split & top1 & law\_pair & law\_gap & same$<$wr & full str. & mat\_mrr\\
\midrule
  all-neg (main OMC) & IID & 0.375$\pm$0.139 & 0.694$\pm$0.005 & 1.30$\pm$0.60 & 0.697$\pm$0.006 & \textbf{0.697}$\pm$0.006 & 0.272$\pm$0.012 \\
  all-neg (main OMC) & OOD & 0.363$\pm$0.116 & 0.650$\pm$0.006 & 1.21$\pm$0.61 & 0.665$\pm$0.009 & 0.665$\pm$0.009 & 0.271$\pm$0.009 \\
  no-law-mismatch & IID & 0.996$\pm$0.001 & 0.000 & 0.00 & 0.507$\pm$0.010 & 0.507$\pm$0.010 & 0.213$\pm$0.003 \\
  no-law-mismatch & OOD & 0.991$\pm$0.001 & 0.000 & 0.00 & 0.514$\pm$0.007 & 0.514$\pm$0.007 & 0.209$\pm$0.004 \\
  law-mismatch only & IID & 0.416$\pm$0.007 & \textbf{0.700}$\pm$0.004 & \textbf{2.17}$\pm$0.08 & \textbf{0.712}$\pm$0.004 & 0.676$\pm$0.011 & 0.284$\pm$0.012 \\
  law-mismatch only & OOD & 0.403$\pm$0.003 & 0.695$\pm$0.003 & 2.11$\pm$0.11 & 0.692$\pm$0.006 & 0.660$\pm$0.013 & \textbf{0.286}$\pm$0.013 \\
\bottomrule
\end{tabular}
\end{table}

\paragraph{The pairwise head trades off, it does not dominate.}
Table~\ref{tab:component} compares the pairwise-on (\texttt{main\_all}) and
pairwise-off settings under the same all-negative objective. The pairwise
head raises generic top-1 and lowers variance and slightly improves
\textsc{same\_lt\_wrong}, while pairwise-off attains slightly higher
\textsc{matrix\_mrr}. We therefore do not attribute the OMC effect to the
pairwise component; it shifts the trade-off between generic ranking and
matrix retrieval.

\begin{table}[t]
\centering\scriptsize\setlength{\tabcolsep}{4pt}
\caption{Component ablation (LOWM, $5$ seeds, all-negative objective,
\texttt{matrix\_size}$=16$).}
\label{tab:component}
\begin{tabular}{ll rrrrr}
\toprule
Objective & Split & top1 & law\_pair & same$<$wr & mat\_mrr & mat\_top1\\
\midrule
  main\_all (pairwise on) & IID & 0.509$\pm$0.010 & 0.692$\pm$0.003 & \textbf{0.703}$\pm$0.004 & 0.255$\pm$0.007 & 0.092$\pm$0.008 \\
  main\_all (pairwise on) & OOD & 0.482$\pm$0.011 & 0.656$\pm$0.004 & 0.670$\pm$0.003 & 0.257$\pm$0.003 & 0.096$\pm$0.005 \\
  no\_pairwise (main OMC) & IID & 0.375$\pm$0.139 & \textbf{0.694}$\pm$0.005 & 0.697$\pm$0.006 & \textbf{0.272}$\pm$0.012 & \textbf{0.105}$\pm$0.017 \\
  no\_pairwise (main OMC) & OOD & 0.363$\pm$0.116 & 0.650$\pm$0.006 & 0.665$\pm$0.009 & 0.271$\pm$0.009 & 0.103$\pm$0.011 \\
\bottomrule
\end{tabular}
\end{table}

\paragraph{Visualizing the mechanism.}
Figure~\ref{fig:matrix} shows a representative operator$\times$trajectory
energy matrix: under OMC the matched (diagonal) operator gives the lowest
energy within each column, whereas the baseline shows no such structure.
Figure~\ref{fig:strat} shows the corresponding energy distributions: OMC
realizes $\tsame{<}\twrong{<}\tnoise$, while the baseline collapses $\tsame$
and $\twrong$ yet still separates $\tnoise$---operator-blindness made
visible.

\begin{figure}[t]
\centering
\includegraphics[width=\linewidth]{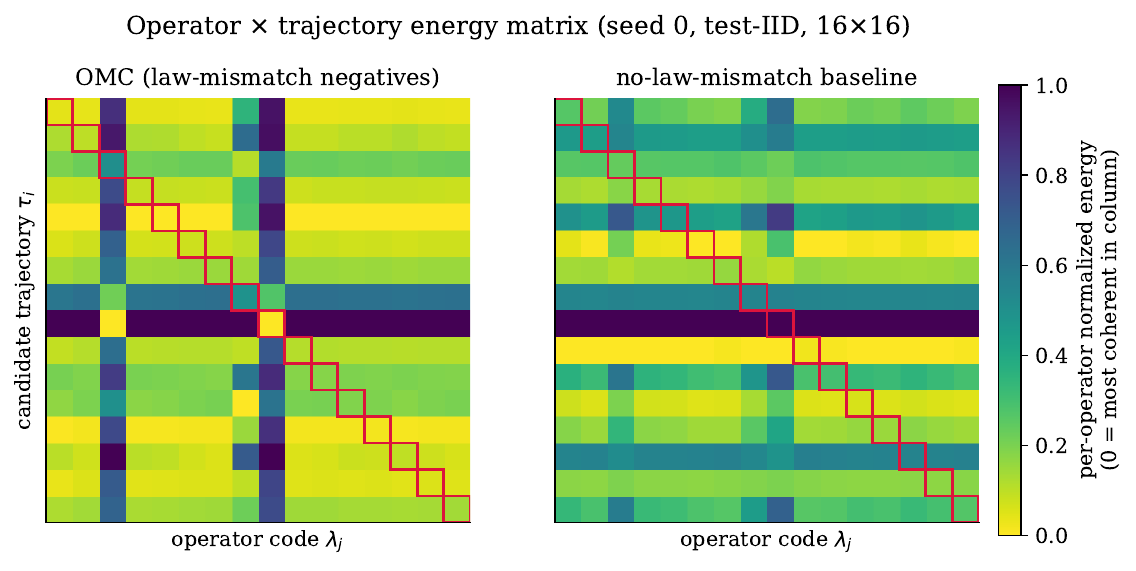}
\caption{\textbf{Energy matrix} (seed 0, IID, $16\times16$), per-operator
normalized ($0=$ most coherent in each column). OMC concentrates low energy
on the diagonal; the baseline does not.}
\label{fig:matrix}
\end{figure}

\begin{figure}[t]
\centering
\includegraphics[width=\linewidth]{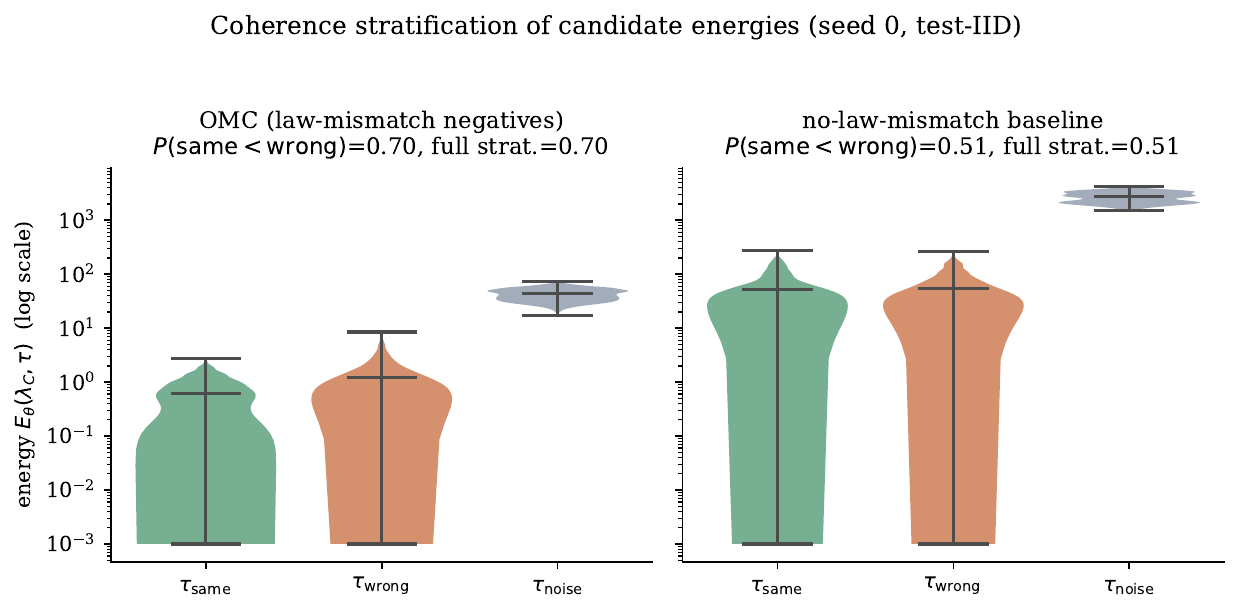}
\caption{\textbf{Coherence stratification} (seed 0, IID). OMC separates
$\tsame$, $\twrong$, $\tnoise$; the baseline collapses same vs.\ wrong while
placing noise far above.}
\label{fig:strat}
\end{figure}

\paragraph{The effect is relational, not operator decodability.}
OMC does not produce a more decodable operator code: linear/MLP probes on
$\lamC$ recover operator identity and binned parameters about equally for OMC
and the baseline (seed-0 IID: linear operator-id $0.60$ vs.\ $0.61$;
binned-parameter $0.89$ vs.\ $0.90$; parameter $R^2$ $0.35$ vs.\ $0.39$).%
\ The gain is therefore relational---OMC shapes the compatibility
$\E(\lamC,\tau)$, consistent with Prop.~\ref{prop:order}, rather than the
operator embedding. Full probe numbers are in Appendix~\ref{app:perseed}.

\subsection{Sensitivity analysis}
\label{sec:sensitivity}
We vary the main design choices around the paper's operating point (no-pairwise,
$\alpha_{\text{occl}}{=}0$, all-4 negatives), changing one knob at a time ($3$ seeds;
the $\alpha_{\text{occl}}{=}0$ / all-4 row reproduces Table~\ref{tab:main}). We report
\texttt{test\_iid}; the OOD split shows the same pattern.

\paragraph{Training the matrix term ($\alpha_{\text{occl}}>0$).}
Weighting the operator-coherence matrix loss (Table~\ref{tab:alpha}) gives a small
additional gain in operator-sensitive metrics (\textsc{law\_pair} $0.694\!\to\!0.710$,
\textsc{matrix\_mrr} $0.269\!\to\!0.289$) at a further cost to generic top-1
($0.38\!\to\!0.24$). Since the wrong-operator negative alone
($\alpha_{\text{occl}}{=}0$) already captures most of the effect, we keep
$\alpha_{\text{occl}}{=}0$ as the simplest sufficient setting; training the matrix
term is a viable, marginally stronger alternative when generic top-1 is not a concern.

\begin{table}[t]
\centering\footnotesize\setlength{\tabcolsep}{6pt}
\caption{Sensitivity to $\alpha_{\text{occl}}$ (LOWM, no-pairwise, all-4 negatives;
$3$ seeds, \texttt{test\_iid}). $\alpha_{\text{occl}}{=}0$ is the paper setting.}
\label{tab:alpha}
\begin{tabular}{r rrrr}
\toprule
$\alpha_{\text{occl}}$ & top1 & law\_pair & same$<$wr & mat\_mrr\\
\midrule
  0.0 & 0.384$\pm$0.125 & 0.694$\pm$0.007 & 0.693$\pm$0.003 & 0.269$\pm$0.004 \\
  0.1 & 0.219$\pm$0.016 & 0.703$\pm$0.006 & 0.700$\pm$0.007 & 0.290$\pm$0.006 \\
  0.5 & 0.230$\pm$0.033 & 0.706$\pm$0.008 & 0.703$\pm$0.004 & \textbf{0.292}$\pm$0.010 \\
  1.0 & 0.239$\pm$0.043 & \textbf{0.710}$\pm$0.009 & \textbf{0.710}$\pm$0.006 & 0.289$\pm$0.005 \\
\bottomrule
\end{tabular}
\end{table}

\paragraph{Negative-set composition.}
Table~\ref{tab:compose} varies the candidate negatives. Any set containing the
wrong-operator negative reaches \textsc{law\_pair}/\textsc{same\_lt\_wrong}
$\approx 0.69$--$0.71$; removing it (\textsc{no-law}) collapses operator sensitivity
to chance (\textsc{same\_lt\_wrong} $0.506$, \textsc{matrix\_mrr} $0.211$) while
generic top-1 stays near-perfect ($0.996$). The non-operator negatives mainly trade
operator sensitivity for generic top-1 (adding \texttt{random} raises top-1 to $0.56$
at no cost to operator metrics), making the OMC top-1 reduction a tunable design
choice rather than an inherent penalty. This is the cleanest confirmation of
Prop.~\ref{prop:order}: the wrong-operator negative is necessary and largely sufficient.

\begin{table}[t]
\centering\footnotesize\setlength{\tabcolsep}{5pt}
\caption{Sensitivity to negative-set composition (LOWM, $\alpha_{\text{occl}}{=}0$,
no-pairwise; $3$ seeds, \texttt{test\_iid}).}
\label{tab:compose}
\begin{tabular}{l rrrr}
\toprule
Negative set & top1 & law\_pair & same$<$wr & mat\_mrr\\
\midrule
  law-mismatch only & 0.419$\pm$0.003 & 0.707$\pm$0.002 & 0.711$\pm$0.007 & 0.289$\pm$0.002 \\
  law-mismatch $+$ random & 0.561$\pm$0.001 & \textbf{0.708}$\pm$0.003 & \textbf{0.713}$\pm$0.003 & \textbf{0.292}$\pm$0.007 \\
  all-4 (main OMC) & 0.384$\pm$0.125 & 0.694$\pm$0.007 & 0.693$\pm$0.003 & 0.269$\pm$0.004 \\
  no-law ($\twrong$ removed) & 0.996$\pm$0.002 & 0.000 & 0.506$\pm$0.014 & 0.211$\pm$0.003 \\
\bottomrule
\end{tabular}
\end{table}

\paragraph{Robustness to $\delta$ and temperature.}
Operator-sensitive metrics are essentially flat across the operator-distance threshold
$\delta\in[0.05,0.4]$ and the InfoNCE temperature $T\in[0.5,4]$
(Figure~\ref{fig:sensitivity}): \textsc{law\_pair} stays at $\approx 0.69$ throughout,
so OMC is insensitive to these hyperparameters within the tested ranges.

\begin{figure}[t]
\centering
\includegraphics[width=\linewidth]{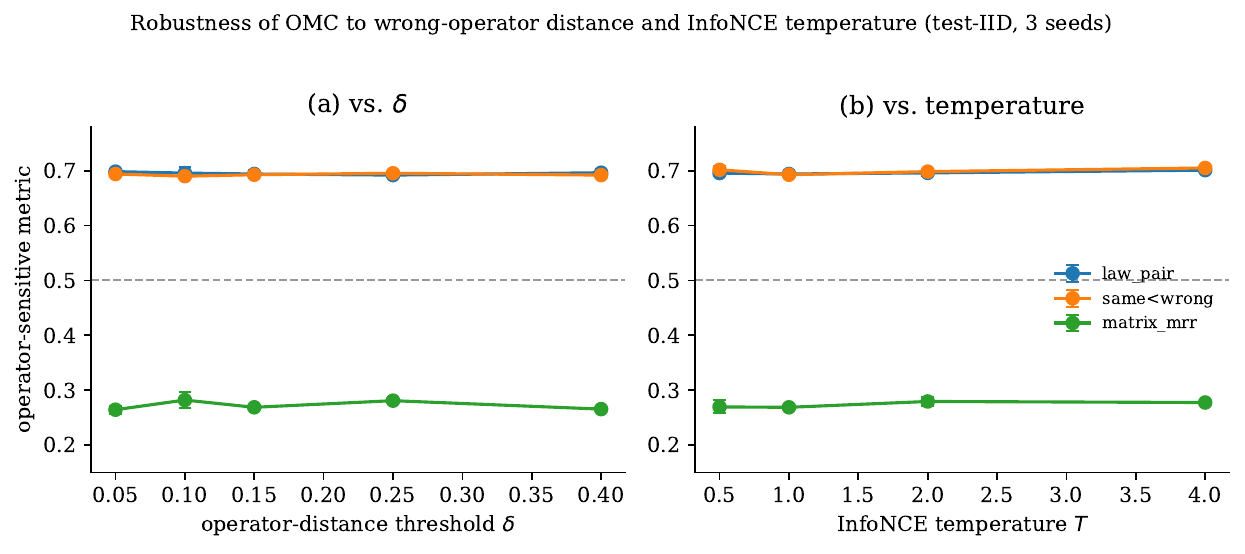}
\caption{\textbf{Robustness} of OMC to wrong-operator distance $\delta$ (left) and
InfoNCE temperature $T$ (right); operator-sensitive metrics stay flat
(\texttt{test\_iid}, $3$ seeds).}
\label{fig:sensitivity}
\end{figure}

\section{General World-Model Baselines}
\label{sec:baselines}

Table~\ref{tab:baselines} places OMC against feature-level C-SWM-style
\cite{kipf2020} and RSSM-lite scorers. With ordinary structured negatives
(random, state-corrupted, temporal), the C-SWM-style scorer stays at chance
on operator-sensitive metrics---contrastive structure alone does not induce
operator sensitivity, exactly as Prop.~\ref{prop:order} predicts when
$\twrong\notin\Cand$. RSSM-lite is stronger but remains below OMC-trained
scorers. These are feature-level scorers, not full agents.

\begin{table}[t]
\centering\footnotesize\setlength{\tabcolsep}{5pt}
\caption{General world-model baselines ($3$ seeds, IID; OOD in
Appendix~\ref{app:perseed}). Feature-level scorers.}
\label{tab:baselines}
\begin{tabular}{ll rrrr}
\toprule
Model & Variant & top1 & law\_pair & same$<$wr & mat\_mrr\\
\midrule
  C-SWM-style & OMC-enhanced & 0.358$\pm$0.042 & 0.512$\pm$0.006 & 0.491$\pm$0.012 & 0.212$\pm$0.000 \\
  C-SWM-style & random-neg & 0.214$\pm$0.012 & 0.483$\pm$0.003 & 0.493$\pm$0.006 & 0.211$\pm$0.000 \\
  C-SWM-style & state-corrupt & 0.237$\pm$0.011 & 0.482$\pm$0.007 & 0.492$\pm$0.004 & 0.211$\pm$0.000 \\
  C-SWM-style & temporal-neg & 0.115$\pm$0.017 & 0.513$\pm$0.024 & 0.517$\pm$0.026 & 0.228$\pm$0.015 \\
  RSSM-lite & (no-law obj.) & 0.590$\pm$0.005 & \textbf{0.596}$\pm$0.005 & \textbf{0.609}$\pm$0.007 & \textbf{0.270}$\pm$0.011 \\
\bottomrule
\end{tabular}
\end{table}

\noindent\emph{Clarification (Table~\ref{tab:baselines}).} ``OMC-enhanced'' adds the
wrong-operator negative to the C-SWM-style set. That operator sensitivity still stays
at chance indicates an objective--architecture interaction: the C-SWM-style encoder
does not expose the operator information the negative would exploit.

\section{CoPhy External Validation}
\label{sec:cophy}

The effect transfers to an external counterfactual-physics dataset
(Table~\ref{tab:cophy}, Figure~\ref{fig:summary}). On CoPhy \texttt{ballsCF},
OMC raises wrong-confounder rejection (\textsc{same\_lt\_wrong}
$\approx\!0.74$ vs.\ $0.50$) and operator$\times$trajectory retrieval
(\textsc{matrix\_mrr} $\approx\!0.34$ vs.\ $0.21$) for both backbones, so
operator-blindness and its OMC remedy are not artifacts of the synthetic
generator. CoPhy is used as feature-level external validation.

\begin{table}[t]
\centering\footnotesize\setlength{\tabcolsep}{4.5pt}
\caption{CoPhy \texttt{ballsCF} external validation (\texttt{test}, $5$ seeds).}
\label{tab:cophy}
\begin{tabular}{ll rrrr r}
\toprule
Backbone & Obj. & generic top1 & same$<$wr & full str. & mat\_mrr & diag-gap\\
\midrule
  LOWM & OMC & 0.733$\pm$0.006 & \textbf{0.735}$\pm$0.006 & \textbf{0.714}$\pm$0.014 & \textbf{0.344}$\pm$0.004 & \textbf{1.368}$\pm$0.031 \\
  LOWM & no-law & 0.496$\pm$0.002 & 0.496$\pm$0.002 & 0.449$\pm$0.041 & 0.211$\pm$0.001 & 0.332$\pm$0.215 \\
  direct & OMC & 0.736$\pm$0.014 & \textbf{0.736}$\pm$0.014 & \textbf{0.736}$\pm$0.014 & \textbf{0.309}$\pm$0.007 & 1.044$\pm$0.014 \\
  direct & no-law & 0.504$\pm$0.007 & 0.504$\pm$0.007 & 0.504$\pm$0.007 & 0.210$\pm$0.002 & 0.012$\pm$0.033 \\
\bottomrule
\end{tabular}
\end{table}

\begin{figure}[t]
\centering
\includegraphics[width=\linewidth]{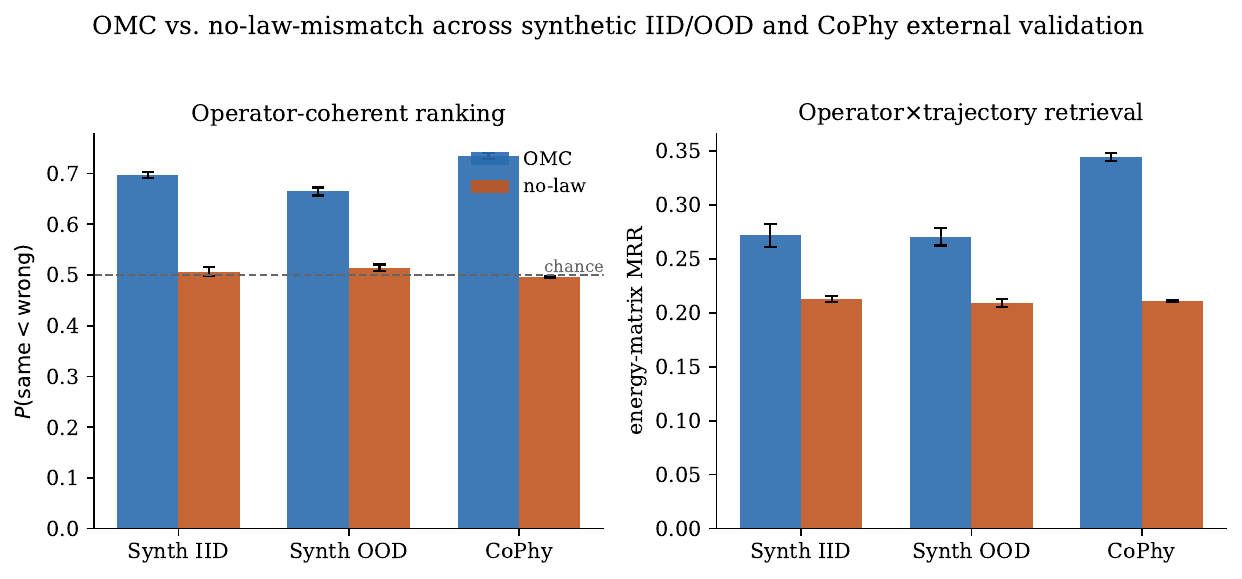}
\caption{\textbf{Summary across settings.} OMC vs.\ no-law on
operator-coherent ranking (left) and energy-matrix retrieval (right) for
synthetic IID/OOD and CoPhy. The baseline stays near chance everywhere; OMC
is consistently above it, including on external CoPhy.}
\label{fig:summary}
\end{figure}

\section{Limitations and Future Directions}
\label{sec:limits}

By design OMC is a \emph{minimal and attachable} coherence signal rather than a
standalone world model: it demonstrates improved \emph{discrimination} of
world-relative coherence---not generation, planning, or control---and its
effect is relational rather than a more decodable operator code. This minimality
is a feature: the entire gain follows from one drop-in change to what counts as a
negative, which makes the signal portable. The pairwise
head is not universally beneficial, \textsc{law\_gap} is comparable only
within a model family (energy scales differ across backbones), and the CoPhy
results are feature-level rather than raw-video. Constructing the
wrong-operator negative requires access to alternative-operator trajectories
at training time, a property of the signal, not of inference.

The coherence signal suggests a generator--critic decomposition: a proposal
model free to entertain plausible-elsewhere futures, paired with OMC as a
critic that reranks them for coherence here. Whether such generativity helps
downstream world models, and whether OMC can serve as an auxiliary verifier
inside latent-dynamics or JEPA-style models, is left to future work.

Three analyses would sharpen the present claims and are immediate next
steps: (i) a finer \emph{operator-distance} study---beyond the threshold
sweep of Section~\ref{sec:sensitivity}---quantifying how same-vs-wrong
separation degrades as the wrong operator approaches the true one; (ii) a
\emph{calibration/verifier} analysis (reliability or AUROC) testing OMC as a
decision rule rather than only a ranking metric; and (iii) a
\emph{data/operator-budget} curve establishing how many alternative-operator
trajectories the signal needs. We regard these as confirmatory rather than
load-bearing: the necessity of the wrong-operator negative is already established
theoretically (Prop.~\ref{prop:order}) and by the ablations of
Section~\ref{sec:synthetic}.

\section{Conclusion}
\label{sec:conclusion}

Plausibility is not coherence. A trajectory scorer can rank generic futures
almost perfectly while remaining blind to which world is in force---a failure
we formalized as operator-blindness and proved invisible to generic ranking.
Operator-Mismatch Contrastive learning makes valid-but-wrong-operator
trajectories the hard negatives, and we showed both theoretically and
empirically---across synthetic IID/OOD, two backbones, and external
CoPhy---that this single change to \emph{what counts as a negative} exposes
and repairs the failure.


\appendix
\section{Notation and Formal Definitions}
\label{app:notation}
Context $C$; operator prior $p(\lambda)$ and conditional $p(\cdot\mid\lambda)$;
posterior $q_\phi(\lambda\mid C)=\mathcal{N}(\mu_\phi(C),\mathrm{diag}\,\sigma^2_\phi(C))$;
operator code $\lamC=\mu_\phi(C)$; energy $\E(\lamC,\tau)$ (lower $=$ more
coherent); candidate classes $\tsame,\twrong,\tnoise$ as in
Section~\ref{sec:formulation}; target ordering Eq.~\eqref{eq:ordering};
operator-blindness Definition~\ref{def:blind}.

\section{Metric Definitions}
\label{app:metrics}
For a sample with candidate energies $\{E_m\}_{m=1}^{M}$ and positive index $y$:
\begin{itemize}
  \item \textbf{top1\_acc} $=\Pr[\arg\min_m E_m=y]$ over the $M{=}5$ set (diagnostic).
  \item \textbf{law\_pair} $=\Pr[\E(\twrong)>\E(\tsame)]$.
  \item \textbf{law\_gap} $=\mathbb{E}[\E(\twrong)-\E(\tsame)]$ (within model family only).
  \item \textbf{law\_only\_top1} $=$ top1\_acc on $\{\tsame,\twrong\}$.
  \item \textbf{same\_lt\_wrong} $=\Pr[\E(\tsame)<\E(\twrong)]$ on $\{\tsame,\twrong,\tnoise\}$.
  \item \textbf{full\_stratification} $=\Pr[\E(\tsame)<\E(\twrong)<\E(\tnoise)]$.
  \item \textbf{Energy matrix} $M_{ij}=\E(\tau_i,\lambda_j)$ over $B{=}16$ distinct-operator
  samples; with $r^{\mathrm{row}}_i,r^{\mathrm{col}}_j$ the diagonal ranks,
  \textbf{matrix\_mrr}$=\tfrac12(\overline{1/r^{\mathrm{row}}}+\overline{1/r^{\mathrm{col}}})$,
  \textbf{matrix\_top1}$=\tfrac12(\overline{[r^{\mathrm{row}}{=}1]}+\overline{[r^{\mathrm{col}}{=}1]})$,
  \textbf{diagonal\_vs\_offdiag\_gap}$=\overline{M_{i\neq j}}-\overline{M_{ii}}$.
\end{itemize}
Definitions match the evaluation code; energy-matrix metrics use
\texttt{matrix\_size}$=16$, \texttt{max\_batches}$=8$.

\section{Objective Details and Proofs}
\label{app:objective}
$\mathcal{L}_{\text{OCCL}}$ is a symmetric InfoNCE over $M_{ij}=\E(\tau_i,\lambda_j)$
(matching trajectories to operators in both directions, temperature $1$); in
all reported runs $\alpha_{\text{occl}}=0$, so it never enters the gradient.
Proofs of Propositions~\ref{prop:insens} and~\ref{prop:order} are given with
their statements in Sections~\ref{sec:formulation} and~\ref{sec:method};
both rely only on (i) the candidate set membership and (ii) the standard
noise-contrastive optimality result \cite{gutmann2010,oord2018}.

\paragraph{Robustness of Proposition~\ref{prop:order} in practice.}
The argument is asymptotic (population limit, expressive scorer, temperature~$1$).
Empirically the induced ordering is stable away from that idealization: it is
essentially unchanged across InfoNCE temperature $T\in[0.5,4]$ and operator-distance
threshold $\delta\in[0.05,0.4]$ (Section~\ref{sec:sensitivity},
Figure~\ref{fig:sensitivity}), and holds even though the candidate set mixes in noise
negatives drawn from outside the data distribution---these are separated trivially and
do not interfere with the same-vs-wrong ordering (Figure~\ref{fig:strat}).

\section{Per-Seed Result Tables}
\label{app:perseed}
All experiments, scenarios, and seeds are reported below
(\texttt{matrix\_size}$=16$, \texttt{max\_batches}$=8$). Splits: IID
(\texttt{test\_iid}), OOD (\texttt{test\_ood\_param}), and \texttt{test}/\texttt{val}
for CoPhy.
\IfFileExists{appendix_per_seed_tables.tex}{%

\subsection{Main synthetic (LOWM and direct-context, OMC vs.\ no-law-mismatch)}
{\scriptsize\setlength{\tabcolsep}{3pt}
\begin{longtable}{llrrrrrrrrr}
\toprule
seed & split & top1 & law\_pair & law\_gap & law\_only & same$<$wr & full str. & mat\_mrr & mat\_t1 & diag-gap \\
\midrule\endfirsthead
\toprule seed & split & top1 & law\_pair & law\_gap & law\_only & same$<$wr & full str. & mat\_mrr & mat\_t1 & diag-gap \\ \midrule\endhead
\bottomrule\endfoot
\multicolumn{11}{l}{\itshape direct\_context\_energy\_direct\_context\_energy} \\
0 & IID & 0.586 & 0.699 & 3.87 & 0.413 & 0.707 & 0.707 & 0.260 & 0.098 & 2.51 \\
1 & IID & 0.402 & 0.490 & -0.01 & 0.199 & 0.499 & 0.499 & 0.211 & 0.062 & 0.00 \\
2 & IID & 0.590 & 0.705 & 4.10 & 0.409 & 0.703 & 0.703 & 0.255 & 0.090 & 2.66 \\
3 & IID & 0.611 & 0.717 & 2.57 & 0.401 & 0.716 & 0.716 & 0.257 & 0.090 & 1.76 \\
4 & IID & 0.583 & 0.704 & 3.94 & 0.414 & 0.716 & 0.716 & 0.257 & 0.098 & 2.56 \\
0 & OOD & 0.567 & 0.690 & 3.67 & 0.397 & 0.713 & 0.713 & 0.262 & 0.102 & 2.47 \\
1 & OOD & 0.391 & 0.489 & -0.04 & 0.208 & 0.490 & 0.490 & 0.211 & 0.062 & 0.00 \\
2 & OOD & 0.563 & 0.684 & 3.98 & 0.391 & 0.706 & 0.706 & 0.255 & 0.094 & 2.64 \\
3 & OOD & 0.583 & 0.687 & 2.58 & 0.385 & 0.712 & 0.712 & 0.255 & 0.090 & 1.75 \\
4 & OOD & 0.563 & 0.689 & 3.69 & 0.376 & 0.704 & 0.704 & 0.260 & 0.102 & 2.48 \\
0 & val & 0.608 & 0.744 & 4.58 & 0.461 & -- & -- & -- & -- & -- \\
1 & val & 0.435 & 0.513 & -0.02 & 0.210 & -- & -- & -- & -- & -- \\
2 & val & 0.621 & 0.749 & 4.87 & 0.461 & -- & -- & -- & -- & -- \\
3 & val & 0.644 & 0.750 & 3.09 & 0.450 & -- & -- & -- & -- & -- \\
4 & val & 0.610 & 0.750 & 4.70 & 0.457 & -- & -- & -- & -- & -- \\
\multicolumn{11}{l}{\itshape lowm\_lowm\_omcr\_no\_pairwise} \\
0 & IID & 0.331 & 0.695 & 0.99 & 0.409 & 0.696 & 0.696 & 0.273 & 0.113 & 0.66 \\
1 & IID & 0.526 & 0.687 & 2.07 & 0.409 & 0.692 & 0.692 & 0.268 & 0.102 & 1.43 \\
2 & IID & 0.294 & 0.700 & 0.83 & 0.409 & 0.690 & 0.690 & 0.265 & 0.094 & 0.55 \\
3 & IID & 0.514 & 0.698 & 1.98 & 0.413 & 0.705 & 0.705 & 0.261 & 0.086 & 1.35 \\
4 & IID & 0.211 & 0.691 & 0.65 & 0.416 & 0.702 & 0.702 & 0.292 & 0.129 & 0.43 \\
0 & OOD & 0.320 & 0.647 & 0.87 & 0.340 & 0.657 & 0.657 & 0.262 & 0.094 & 0.59 \\
1 & OOD & 0.497 & 0.647 & 2.00 & 0.367 & 0.659 & 0.659 & 0.268 & 0.098 & 1.46 \\
2 & OOD & 0.294 & 0.660 & 0.73 & 0.362 & 0.677 & 0.677 & 0.286 & 0.117 & 0.48 \\
3 & OOD & 0.474 & 0.650 & 1.90 & 0.366 & 0.659 & 0.659 & 0.265 & 0.094 & 1.38 \\
4 & OOD & 0.232 & 0.647 & 0.57 & 0.358 & 0.671 & 0.671 & 0.273 & 0.113 & 0.38 \\
0 & val & 0.369 & 0.724 & 1.06 & 0.399 & -- & -- & -- & -- & -- \\
1 & val & 0.563 & 0.727 & 2.41 & 0.430 & -- & -- & -- & -- & -- \\
2 & val & 0.334 & 0.734 & 0.90 & 0.436 & -- & -- & -- & -- & -- \\
3 & val & 0.537 & 0.724 & 2.27 & 0.429 & -- & -- & -- & -- & -- \\
4 & val & 0.270 & 0.730 & 0.70 & 0.431 & -- & -- & -- & -- & -- \\
\multicolumn{11}{l}{\itshape lowm\_no\_law\_mismatch} \\
0 & IID & 0.996 & 0.000 & 0.00 & 0.185 & 0.511 & 0.511 & 0.209 & 0.055 & 1.62 \\
1 & IID & 0.997 & 0.000 & 0.00 & 0.207 & 0.516 & 0.516 & 0.214 & 0.062 & 3.41 \\
2 & IID & 0.994 & 0.000 & 0.00 & 0.192 & 0.490 & 0.490 & 0.211 & 0.059 & 1.53 \\
3 & IID & 0.995 & 0.000 & 0.00 & 0.184 & 0.511 & 0.511 & 0.215 & 0.062 & 0.57 \\
4 & IID & 0.996 & 0.000 & 0.00 & 0.212 & 0.506 & 0.506 & 0.215 & 0.066 & 1.45 \\
0 & OOD & 0.992 & 0.000 & 0.00 & 0.207 & 0.505 & 0.505 & 0.205 & 0.055 & 0.24 \\
1 & OOD & 0.992 & 0.000 & 0.00 & 0.204 & 0.524 & 0.524 & 0.207 & 0.062 & 0.29 \\
2 & OOD & 0.992 & 0.000 & 0.00 & 0.211 & 0.517 & 0.517 & 0.212 & 0.062 & 0.36 \\
3 & OOD & 0.989 & 0.000 & 0.00 & 0.202 & 0.509 & 0.509 & 0.207 & 0.059 & -0.30 \\
4 & OOD & 0.992 & 0.000 & 0.00 & 0.200 & 0.515 & 0.515 & 0.215 & 0.070 & 0.21 \\
0 & val & 0.997 & 0.000 & 0.00 & 0.207 & -- & -- & -- & -- & -- \\
1 & val & 0.995 & 0.000 & 0.00 & 0.241 & -- & -- & -- & -- & -- \\
2 & val & 0.997 & 0.000 & 0.00 & 0.208 & -- & -- & -- & -- & -- \\
3 & val & 0.997 & 0.000 & 0.00 & 0.216 & -- & -- & -- & -- & -- \\
4 & val & 0.995 & 0.000 & 0.00 & 0.203 & -- & -- & -- & -- & -- \\
\end{longtable}}

\subsection{Negative-type ablation: law-mismatch-only}
{\scriptsize\setlength{\tabcolsep}{3pt}
\begin{longtable}{llrrrrrrrrr}
\toprule
seed & split & top1 & law\_pair & law\_gap & law\_only & same$<$wr & full str. & mat\_mrr & mat\_t1 & diag-gap \\
\midrule\endfirsthead
\toprule seed & split & top1 & law\_pair & law\_gap & law\_only & same$<$wr & full str. & mat\_mrr & mat\_t1 & diag-gap \\ \midrule\endhead
\bottomrule\endfoot
\multicolumn{11}{l}{\itshape lowm\_negs\_law\_mismatch\_only\_alpha0\_lambda16} \\
0 & IID & 0.406 & 0.700 & 2.26 & -- & 0.716 & 0.688 & 0.299 & 0.133 & 1.60 \\
1 & IID & 0.419 & 0.695 & 2.18 & -- & 0.716 & 0.675 & 0.277 & 0.105 & 1.52 \\
2 & IID & 0.417 & 0.699 & 2.11 & -- & 0.707 & 0.667 & 0.266 & 0.094 & 1.47 \\
3 & IID & 0.426 & 0.707 & 2.26 & -- & 0.708 & 0.661 & 0.294 & 0.125 & 1.57 \\
4 & IID & 0.412 & 0.697 & 2.04 & -- & 0.713 & 0.688 & 0.282 & 0.113 & 1.43 \\
0 & OOD & 0.399 & 0.699 & 2.20 & -- & 0.699 & 0.675 & 0.309 & 0.129 & 1.48 \\
1 & OOD & 0.401 & 0.690 & 2.11 & -- & 0.698 & 0.666 & 0.269 & 0.090 & 1.41 \\
2 & OOD & 0.405 & 0.692 & 2.01 & -- & 0.690 & 0.654 & 0.280 & 0.102 & 1.35 \\
3 & OOD & 0.403 & 0.696 & 2.25 & -- & 0.685 & 0.639 & 0.284 & 0.102 & 1.55 \\
4 & OOD & 0.406 & 0.697 & 1.95 & -- & 0.688 & 0.667 & 0.289 & 0.109 & 1.31 \\
0 & val & 0.445 & 0.741 & 2.84 & 0.445 & -- & -- & -- & -- & -- \\
1 & val & 0.441 & 0.737 & 2.73 & 0.441 & -- & -- & -- & -- & -- \\
2 & val & 0.452 & 0.738 & 2.63 & 0.452 & -- & -- & -- & -- & -- \\
3 & val & 0.430 & 0.733 & 2.82 & 0.430 & -- & -- & -- & -- & -- \\
4 & val & 0.439 & 0.737 & 2.53 & 0.439 & -- & -- & -- & -- & -- \\
\end{longtable}}

\subsection{Component ablation: \texttt{main\_all} (pairwise energy on)}
{\scriptsize\setlength{\tabcolsep}{3pt}
\begin{longtable}{llrrrrrrrrr}
\toprule
seed & split & top1 & law\_pair & law\_gap & law\_only & same$<$wr & full str. & mat\_mrr & mat\_t1 & diag-gap \\
\midrule\endfirsthead
\toprule seed & split & top1 & law\_pair & law\_gap & law\_only & same$<$wr & full str. & mat\_mrr & mat\_t1 & diag-gap \\ \midrule\endhead
\bottomrule\endfoot
\multicolumn{11}{l}{\itshape lowm\_component\_main\_all} \\
0 & IID & 0.508 & 0.691 & 1.61 & -- & 0.701 & 0.701 & 0.250 & 0.090 & 1.13 \\
1 & IID & 0.520 & 0.696 & 1.71 & -- & 0.696 & 0.696 & 0.264 & 0.102 & 1.20 \\
2 & IID & 0.508 & 0.688 & 1.63 & -- & 0.705 & 0.705 & 0.252 & 0.086 & 1.15 \\
3 & IID & 0.492 & 0.691 & 1.72 & -- & 0.703 & 0.703 & 0.246 & 0.082 & 1.20 \\
4 & IID & 0.518 & 0.692 & 1.64 & -- & 0.708 & 0.708 & 0.261 & 0.102 & 1.17 \\
0 & OOD & 0.477 & 0.651 & 1.54 & -- & 0.669 & 0.669 & 0.258 & 0.094 & 1.18 \\
1 & OOD & 0.495 & 0.659 & 1.64 & -- & 0.671 & 0.671 & 0.261 & 0.102 & 1.22 \\
2 & OOD & 0.489 & 0.661 & 1.57 & -- & 0.675 & 0.675 & 0.253 & 0.090 & 1.18 \\
3 & OOD & 0.463 & 0.653 & 1.63 & -- & 0.667 & 0.667 & 0.253 & 0.094 & 1.21 \\
4 & OOD & 0.487 & 0.656 & 1.60 & -- & 0.670 & 0.670 & 0.260 & 0.102 & 1.18 \\
0 & val & 0.524 & 0.726 & 2.09 & 0.411 & -- & -- & -- & -- & -- \\
1 & val & 0.527 & 0.719 & 2.24 & 0.406 & -- & -- & -- & -- & -- \\
2 & val & 0.532 & 0.722 & 2.14 & 0.413 & -- & -- & -- & -- & -- \\
3 & val & 0.500 & 0.721 & 2.24 & 0.395 & -- & -- & -- & -- & -- \\
4 & val & 0.532 & 0.730 & 2.14 & 0.417 & -- & -- & -- & -- & -- \\
\end{longtable}}

\subsection{Component ablation: \texttt{no\_pairwise} (pairwise energy off)}
{\scriptsize\setlength{\tabcolsep}{3pt}
\begin{longtable}{llrrrrrrrrr}
\toprule
seed & split & top1 & law\_pair & law\_gap & law\_only & same$<$wr & full str. & mat\_mrr & mat\_t1 & diag-gap \\
\midrule\endfirsthead
\toprule seed & split & top1 & law\_pair & law\_gap & law\_only & same$<$wr & full str. & mat\_mrr & mat\_t1 & diag-gap \\ \midrule\endhead
\bottomrule\endfoot
\multicolumn{11}{l}{\itshape lowm\_component\_no\_pairwise} \\
0 & IID & 0.331 & 0.695 & 0.99 & -- & 0.696 & 0.696 & 0.273 & 0.113 & 0.66 \\
1 & IID & 0.526 & 0.687 & 2.07 & -- & 0.692 & 0.692 & 0.268 & 0.102 & 1.43 \\
2 & IID & 0.294 & 0.700 & 0.83 & -- & 0.690 & 0.690 & 0.265 & 0.094 & 0.55 \\
3 & IID & 0.514 & 0.698 & 1.98 & -- & 0.705 & 0.705 & 0.261 & 0.086 & 1.35 \\
4 & IID & 0.210 & 0.691 & 0.65 & -- & 0.702 & 0.702 & 0.292 & 0.129 & 0.43 \\
0 & OOD & 0.320 & 0.647 & 0.87 & -- & 0.657 & 0.657 & 0.262 & 0.094 & 0.59 \\
1 & OOD & 0.497 & 0.647 & 2.00 & -- & 0.659 & 0.659 & 0.268 & 0.098 & 1.46 \\
2 & OOD & 0.294 & 0.660 & 0.73 & -- & 0.677 & 0.677 & 0.286 & 0.117 & 0.48 \\
3 & OOD & 0.474 & 0.650 & 1.90 & -- & 0.659 & 0.659 & 0.265 & 0.094 & 1.38 \\
4 & OOD & 0.232 & 0.647 & 0.57 & -- & 0.671 & 0.671 & 0.273 & 0.113 & 0.38 \\
0 & val & 0.369 & 0.724 & 1.06 & 0.399 & -- & -- & -- & -- & -- \\
1 & val & 0.563 & 0.727 & 2.41 & 0.430 & -- & -- & -- & -- & -- \\
2 & val & 0.334 & 0.734 & 0.90 & 0.436 & -- & -- & -- & -- & -- \\
3 & val & 0.537 & 0.724 & 2.27 & 0.429 & -- & -- & -- & -- & -- \\
4 & val & 0.270 & 0.730 & 0.70 & 0.431 & -- & -- & -- & -- & -- \\
\end{longtable}}

\subsection{C-SWM-style baselines (feature-level)}
{\scriptsize\setlength{\tabcolsep}{3pt}
\begin{longtable}{llrrrrrrrrr}
\toprule
seed & split & top1 & law\_pair & law\_gap & law\_only & same$<$wr & full str. & mat\_mrr & mat\_t1 & diag-gap \\
\midrule\endfirsthead
\toprule seed & split & top1 & law\_pair & law\_gap & law\_only & same$<$wr & full str. & mat\_mrr & mat\_t1 & diag-gap \\ \midrule\endhead
\bottomrule\endfoot
\multicolumn{11}{l}{\itshape cswm\_style\_energy\_cswm\_plus\_omc} \\
0 & IID & 0.348 & 0.505 & 0.01 & 0.194 & 0.485 & 0.485 & 0.212 & 0.062 & 0.00 \\
1 & IID & 0.404 & 0.517 & 0.02 & 0.196 & 0.505 & 0.505 & 0.211 & 0.062 & -0.00 \\
2 & IID & 0.322 & 0.514 & 0.01 & 0.219 & 0.484 & 0.484 & 0.211 & 0.062 & 0.00 \\
0 & OOD & 0.337 & 0.502 & -0.03 & 0.199 & 0.485 & 0.485 & 0.212 & 0.062 & 0.00 \\
1 & OOD & 0.364 & 0.516 & 0.01 & 0.205 & 0.500 & 0.500 & 0.211 & 0.062 & -0.00 \\
2 & OOD & 0.299 & 0.496 & 0.01 & 0.225 & 0.492 & 0.492 & 0.212 & 0.062 & 0.00 \\
\multicolumn{11}{l}{\itshape cswm\_style\_energy\_cswm\_random\_neg} \\
0 & IID & 0.204 & 0.481 & -2.59 & 0.188 & 0.488 & 0.078 & 0.211 & 0.062 & 0.16 \\
1 & IID & 0.227 & 0.486 & -3.80 & 0.184 & 0.492 & 0.058 & 0.211 & 0.062 & 0.08 \\
2 & IID & 0.212 & 0.483 & -2.26 & 0.193 & 0.499 & 0.109 & 0.211 & 0.062 & -0.03 \\
0 & OOD & 0.213 & 0.467 & -6.66 & 0.197 & 0.512 & 0.088 & 0.211 & 0.062 & 0.29 \\
1 & OOD & 0.210 & 0.458 & -9.25 & 0.199 & 0.510 & 0.060 & 0.211 & 0.062 & 0.63 \\
2 & OOD & 0.225 & 0.475 & -3.96 & 0.204 & 0.501 & 0.087 & 0.211 & 0.062 & -0.08 \\
\multicolumn{11}{l}{\itshape cswm\_style\_energy\_cswm\_state\_corrupted\_neg} \\
0 & IID & 0.244 & 0.490 & -36.80 & 0.195 & 0.489 & 0.375 & 0.211 & 0.062 & 2.51 \\
1 & IID & 0.242 & 0.480 & -99.12 & 0.182 & 0.496 & 0.157 & 0.211 & 0.062 & 4.47 \\
2 & IID & 0.224 & 0.477 & -74.54 & 0.189 & 0.492 & 0.216 & 0.211 & 0.062 & 4.33 \\
0 & OOD & 0.222 & 0.476 & -100.08 & 0.195 & 0.511 & 0.391 & 0.212 & 0.062 & 3.46 \\
1 & OOD & 0.221 & 0.462 & -292.88 & 0.202 & 0.504 & 0.160 & 0.211 & 0.062 & 13.84 \\
2 & OOD & 0.204 & 0.463 & -205.58 & 0.199 & 0.513 & 0.225 & 0.211 & 0.062 & 7.40 \\
\multicolumn{11}{l}{\itshape cswm\_style\_energy\_cswm\_temporal\_neg} \\
0 & IID & 0.103 & 0.514 & 1297.24 & 0.238 & 0.499 & 0.002 & 0.234 & 0.086 & 610.12 \\
1 & IID & 0.108 & 0.489 & 3862.39 & 0.206 & 0.504 & 0.000 & 0.211 & 0.066 & 1398.84 \\
2 & IID & 0.134 & 0.537 & 6644.52 & 0.245 & 0.547 & 0.001 & 0.240 & 0.082 & 6941.41 \\
0 & OOD & 0.129 & 0.495 & -119.40 & 0.215 & 0.514 & 0.002 & 0.206 & 0.055 & 169.49 \\
1 & OOD & 0.114 & 0.493 & 3586.63 & 0.200 & 0.501 & 0.000 & 0.212 & 0.066 & 2645.30 \\
2 & OOD & 0.171 & 0.547 & 8494.51 & 0.238 & 0.558 & 0.000 & 0.238 & 0.086 & 2536.95 \\
\end{longtable}}

\subsection{RSSM-lite baseline (feature-level)}
{\scriptsize\setlength{\tabcolsep}{3pt}
\begin{longtable}{llrrrrrrrrr}
\toprule
seed & split & top1 & law\_pair & law\_gap & law\_only & same$<$wr & full str. & mat\_mrr & mat\_t1 & diag-gap \\
\midrule\endfirsthead
\toprule seed & split & top1 & law\_pair & law\_gap & law\_only & same$<$wr & full str. & mat\_mrr & mat\_t1 & diag-gap \\ \midrule\endhead
\bottomrule\endfoot
\multicolumn{11}{l}{\itshape rssm\_lite\_scorer\_rssm\_lite\_mse} \\
0 & IID & 0.596 & 0.602 & 0.00 & 0.282 & 0.608 & 0.608 & 0.269 & 0.113 & 0.00 \\
1 & IID & 0.588 & 0.594 & 0.00 & 0.284 & 0.616 & 0.616 & 0.281 & 0.133 & 0.00 \\
2 & IID & 0.587 & 0.592 & 0.00 & 0.281 & 0.602 & 0.602 & 0.259 & 0.102 & 0.00 \\
0 & OOD & 0.548 & 0.557 & 0.00 & 0.296 & 0.571 & 0.571 & 0.260 & 0.086 & 0.00 \\
1 & OOD & 0.548 & 0.557 & 0.00 & 0.293 & 0.567 & 0.567 & 0.262 & 0.090 & 0.00 \\
2 & OOD & 0.549 & 0.557 & 0.00 & 0.284 & 0.574 & 0.574 & 0.248 & 0.070 & 0.00 \\
\end{longtable}}

\subsection{CoPhy \texttt{ballsCF} external validation (feature-level)}
{\scriptsize\setlength{\tabcolsep}{3pt}
\begin{longtable}{llrrrrrrrrr}
\toprule
seed & split & top1 & law\_pair & law\_gap & law\_only & same$<$wr & full str. & mat\_mrr & mat\_t1 & diag-gap \\
\midrule\endfirsthead
\toprule seed & split & top1 & law\_pair & law\_gap & law\_only & same$<$wr & full str. & mat\_mrr & mat\_t1 & diag-gap \\ \midrule\endhead
\bottomrule\endfoot
\multicolumn{11}{l}{\itshape direct\_context\_energy\_direct\_context\_cophy\_OMC} \\
0 & test & 0.730 & 0.730 & 1.07 & 0.406 & -- & -- & 0.314 & 0.137 & 1.23 \\
1 & test & 0.740 & 0.740 & 1.08 & 0.424 & -- & -- & 0.310 & 0.125 & 1.23 \\
2 & test & 0.728 & 0.730 & 1.05 & 0.402 & -- & -- & 0.310 & 0.125 & 1.23 \\
3 & test & 0.756 & 0.758 & 1.08 & 0.460 & -- & -- & 0.334 & 0.145 & 1.28 \\
4 & test & 0.732 & 0.732 & 1.07 & 0.386 & -- & -- & 0.329 & 0.152 & 1.26 \\
0 & val & 0.700 & 0.706 & 1.00 & 0.424 & -- & -- & 0.315 & 0.145 & 1.05 \\
1 & val & 0.704 & 0.712 & 1.02 & 0.410 & -- & -- & 0.308 & 0.125 & 1.07 \\
2 & val & 0.704 & 0.712 & 1.00 & 0.412 & -- & -- & 0.294 & 0.109 & 1.08 \\
3 & val & 0.724 & 0.732 & 1.06 & 0.444 & -- & -- & 0.323 & 0.141 & 1.12 \\
4 & val & 0.720 & 0.728 & 1.03 & 0.428 & -- & -- & 0.304 & 0.129 & 1.08 \\
\multicolumn{11}{l}{\itshape direct\_context\_energy\_direct\_context\_cophy\_no\_law\_mismatch} \\
0 & test & 0.996 & 0.000 & 0.00 & 0.188 & -- & -- & 0.212 & 0.062 & -0.11 \\
1 & test & 0.996 & 0.000 & 0.00 & 0.198 & -- & -- & 0.212 & 0.062 & -0.14 \\
2 & test & 0.996 & 0.000 & 0.00 & 0.186 & -- & -- & 0.210 & 0.059 & -0.11 \\
3 & test & 0.996 & 0.000 & 0.00 & 0.188 & -- & -- & 0.209 & 0.059 & -0.17 \\
4 & test & 0.996 & 0.000 & 0.00 & 0.192 & -- & -- & 0.208 & 0.059 & -0.13 \\
0 & val & 0.978 & 0.000 & 0.00 & 0.172 & -- & -- & 0.213 & 0.062 & 0.14 \\
1 & val & 0.978 & 0.000 & 0.00 & 0.180 & -- & -- & 0.212 & 0.062 & 0.10 \\
2 & val & 0.980 & 0.000 & 0.00 & 0.180 & -- & -- & 0.213 & 0.062 & 0.11 \\
3 & val & 0.980 & 0.000 & 0.00 & 0.188 & -- & -- & 0.209 & 0.059 & 0.14 \\
4 & val & 0.980 & 0.000 & 0.00 & 0.184 & -- & -- & 0.213 & 0.062 & 0.10 \\
\multicolumn{11}{l}{\itshape lowm\_lowm\_cophy\_OMC} \\
0 & test & 0.738 & 0.752 & 1.44 & 0.426 & -- & -- & 0.365 & 0.160 & 1.74 \\
1 & test & 0.724 & 0.740 & 1.44 & 0.420 & -- & -- & 0.366 & 0.164 & 1.76 \\
2 & test & 0.720 & 0.738 & 1.54 & 0.418 & -- & -- & 0.364 & 0.156 & 1.84 \\
3 & test & 0.730 & 0.748 & 1.45 & 0.430 & -- & -- & 0.372 & 0.168 & 1.75 \\
4 & test & 0.748 & 0.756 & 1.46 & 0.438 & -- & -- & 0.366 & 0.160 & 1.74 \\
0 & val & 0.720 & 0.742 & 1.41 & 0.430 & -- & -- & 0.372 & 0.184 & 1.53 \\
1 & val & 0.716 & 0.744 & 1.45 & 0.426 & -- & -- & 0.363 & 0.168 & 1.55 \\
2 & val & 0.736 & 0.762 & 1.53 & 0.436 & -- & -- & 0.379 & 0.191 & 1.63 \\
3 & val & 0.720 & 0.742 & 1.43 & 0.430 & -- & -- & 0.374 & 0.184 & 1.55 \\
4 & val & 0.722 & 0.740 & 1.45 & 0.446 & -- & -- & 0.363 & 0.172 & 1.55 \\
\multicolumn{11}{l}{\itshape lowm\_lowm\_cophy\_no\_law\_mismatch} \\
0 & test & 0.978 & 0.000 & 0.00 & 0.202 & -- & -- & 0.212 & 0.062 & 1.09 \\
1 & test & 0.982 & 0.000 & 0.00 & 0.198 & -- & -- & 0.211 & 0.062 & 0.66 \\
2 & test & 0.978 & 0.000 & 0.00 & 0.190 & -- & -- & 0.210 & 0.059 & 0.23 \\
3 & test & 0.982 & 0.000 & 0.00 & 0.194 & -- & -- & 0.213 & 0.066 & 0.56 \\
4 & test & 0.988 & 0.000 & 0.00 & 0.190 & -- & -- & 0.212 & 0.062 & 0.41 \\
0 & val & 0.970 & 0.000 & 0.00 & 0.204 & -- & -- & 0.214 & 0.066 & 1.33 \\
1 & val & 0.972 & 0.000 & 0.00 & 0.208 & -- & -- & 0.213 & 0.066 & 0.81 \\
2 & val & 0.968 & 0.000 & 0.00 & 0.216 & -- & -- & 0.214 & 0.066 & 0.37 \\
3 & val & 0.970 & 0.000 & 0.00 & 0.210 & -- & -- & 0.213 & 0.066 & 0.74 \\
4 & val & 0.970 & 0.000 & 0.00 & 0.212 & -- & -- & 0.213 & 0.066 & 0.59 \\
\end{longtable}}
}{\emph{[Per-seed tables file \texttt{appendix\_per\_seed\_tables.tex} not found in this build.]}}

\section{Reproducibility}
\label{app:repro}
Optimizer AdamW \cite{loshchilov2019} (lr $10^{-3}$, weight decay $10^{-4}$);
$\beta_{\text{KL}}{=}10^{-4}$, $\alpha_{\text{stable}}{=}0.1$,
$\alpha_{\text{occl}}{=}0$, OCCL temperature $1$; checkpoint by best
\textsc{law\_pair}. Candidate set $M{=}5$ (one negative per type), operator
distance threshold $\delta{=}0.15$, distinct operators per batch;
$d_\lambda{=}16$, hidden/context dim $128$, $2$ layers, sum energy reduction.
Synthetic: $30$ epochs, batch $64$, $5000$/$1000$ train/val per epoch, seeds
$0$--$4$. CoPhy: $30$ epochs, batch $2048$, $20000$/$3000$, seeds $0$--$4$.
Baselines (C-SWM-style, RSSM-lite): $30$ epochs, batch $1024$,
$20000$/$3000$, seeds $0$--$2$. Energy-matrix eval:
\texttt{matrix\_size}$=16$, \texttt{max\_batches}$=8$. Hardware: Colab GPU
(T4/L4-class).

\section{CoPhy Preprocessing}
\label{app:cophy}
CoPhy \texttt{ballsCF} ($224$px) is reduced to segmentation-derived object
features at $64$px over $20$ frames; the hidden confounder serves as the
operator metadata for negatives and evaluation. Episode split:
$35{,}000$/$7{,}500$/$7{,}500$ (train/val/test), total $50{,}000$.

\end{document}